
\documentclass{article}

\usepackage{microtype}
\usepackage{graphicx}
\usepackage{booktabs} 

\usepackage{hyperref}



\usepackage[accepted]{icml2023}

\usepackage{amsmath}
\usepackage{amssymb}
\usepackage{mathtools}
\usepackage{amsthm}
\usepackage{graphicx}

\usepackage[utf8]{inputenc} 
\usepackage[T1]{fontenc}    
\usepackage{hyperref}       
\usepackage{calc}
\usepackage{url}            
\usepackage{booktabs}       
\usepackage{amsfonts}       
\usepackage{nicefrac}       
\usepackage{microtype}      
\usepackage{xcolor}         
\usepackage{subcaption}
\usepackage{textcase}
\usepackage{thmtools,thm-restate}

\newlength{\depthofsumsign}
\setlength{\depthofsumsign}{\depthof{$\sum$}}
\newlength{\totalheightofsumsign}
\newlength{\heightanddepthofargument}

\makeatletter
\newcommand*{\DivideLengths}[2]{%
  \strip@pt\dimexpr\number\numexpr\number\dimexpr#1\relax*65536/\number\dimexpr#2\relax\relax sp\relax
}
\makeatother

\usepackage{bbm}
\usepackage{booktabs}
\usepackage{pgfplots}
\usepackage{tikz-3dplot}
\usepackage{physics}
\usepackage[outline]{contour} 
\usetikzlibrary{angles,quotes} 
\contourlength{1.2pt}

\tikzset{>=latex} 
\usepackage{xcolor}
\usetikzlibrary{fadings}
\usetikzlibrary{arrows.meta}
\tikzset{arl/.style={line width=4pt, {-Latex[left]}, #1}}
\tikzset{arr/.style={line width=4pt, {-Latex[right]}, #1}}

\colorlet{veccol}{green!70!black}
\colorlet{vcol}{green!70!black}
\colorlet{xcol}{blue!85!black}
\colorlet{projcol}{xcol!60}
\colorlet{unitcol}{xcol!60!black!85}
\colorlet{myblue}{blue!70!black}
\colorlet{myred}{red!90!black}
\colorlet{ntk}{red!90!white}
\colorlet{dpk}{blue!70!white}
\colorlet{epkt}{green!40!black}
\colorlet{diffpk}{orange}
\colorlet{epk}{black!70!gray}
\colorlet{mypurple}{blue!50!red!80!black!80}
\tikzstyle{vector}=[->,line width=0.65mm, xcol]
\usetikzlibrary{intersections, pgfplots.fillbetween}
\usepackage{dsfont}

\tdplotsetmaincoords{60}{115}
\pgfplotsset{compat=newest}

\usepackage[capitalize,noabbrev]{cleveref}

\theoremstyle{plain}
\newtheorem{theorem}{Theorem}[section]

\theoremstyle{definition}
\newtheorem{definition}[theorem]{Definition}

\theoremstyle{remark}

\newenvironment{sproof}{%
  \proof}{\endproof}

\newcounter{remcounter}
\newcommand*{\remlabel}[1]{\refstepcounter{remcounter}\theremcounter\label{#1}}

\usepackage[textsize=tiny]{todonotes}

\icmltitlerunning{An Exact Kernel Equivalence for Finite Classification Models }

\begin{document}

\twocolumn[
\icmltitle{An Exact Kernel Equivalence for Finite Classification Models
}



\icmlsetsymbol{equal}{*}

\begin{icmlauthorlist}
\icmlauthor{Brian Bell}{equal,yyy,comp}
\icmlauthor{Michael Geyer}{equal,yyy,sch}
\icmlauthor{David Glickenstein}{comp}
\icmlauthor{Amanda Fernandez}{sch}
\icmlauthor{Juston Moore}{yyy}
\end{icmlauthorlist}

\icmlaffiliation{yyy}{Los Alamos National Laboratory}
\icmlaffiliation{comp}{University of Arizona}
\icmlaffiliation{sch}{University of Texas San Antonio}

\icmlcorrespondingauthor{Brian Bell}{bwbell@math.arizona.edu}
\icmlcorrespondingauthor{Michael Geyer}{mgeyer@lanl.gov}

\icmlkeywords{Machine Learning, Kernel Machines, Mathematics}

\vskip 0.3in
]



\printAffiliationsAndNotice{\icmlEqualContribution} 

\begin{abstract}
We explore the equivalence between neural networks and kernel methods by deriving the first exact representation of any finite-size parametric classification model trained with gradient descent as a kernel machine. We compare our exact representation to the well-known Neural Tangent Kernel (NTK) and discuss approximation error relative to the NTK and other non-exact path kernel formulations. We experimentally demonstrate that the kernel can be computed for realistic networks up to machine precision. We use this exact kernel to show that our theoretical contribution can provide useful insights into the predictions made by neural networks, particularly the way in which they generalize.
\end{abstract}

\section{Introduction}

This study investigates the relationship between kernel methods and finite parametric models. To date, interpreting the predictions of complex models, like neural networks, has proven to be challenging. Prior work has shown that the inference-time predictions of a neural network can be exactly written as a sum of independent predictions computed with respect to each training point. We formally show that classification models trained with cross-entropy loss can be exactly formulated as a kernel machine. It is our hope that these new theoretical results will open new research directions in the interpretation of neural network behavior.

\begin{figure}[!ht]
\begin{tikzpicture}[scale=0.72]
\def\bang{72}
\def\ang{35}
\def\lang{15}
\def\mang{57}
\def\sang{69}
\def\kang{90}
\def\tang{87}
\def\dang{34}
\def\xs{1.0}
\def\xa{4}
\def\xe{1.0}
\def\xd{0.75}
\def\xn{0.5}
\def\xi{0.4}
\coordinate  (s1) at (0,0);
\coordinate  (si1) at ($ (s1) + (\bang:1) $);
\coordinate  (s2) at ($ (si1) + (\bang:\xs) $);
\coordinate  (s3) at ($ (s2) + (\ang:\xa) $);
\coordinate  (si3) at ($(s3) + (\lang:\xs) $);
\coordinate  (s4) at ($(si3) + (\lang:\xs) $);

\path (si1) -- (s2) node [midway, sloped] (elip) {\ldots};
\path (s3) -- (si3) node [midway, sloped] (elip) {\ldots};
\node[fill=black,circle,inner sep=1.9] (s01) at (s1) {};
\node [below right=0mm and 1mm, rotate=\kang-90] (ss1) at (s01) {$w_1(t=0)$};
\node[fill=black,circle,inner sep=1.9] (s02) at (s2) {};
\node [below right=0mm and 1mm, rotate=\kang-90] (ss2) at (s02) {$w_s(t=0)$};
\node[fill=black,circle,inner sep=1.9] (s03) at (s3) {};
\node [below right=-1mm and 1mm, rotate=\kang-90] (ss3) at (s3) {$w_s(t=1)=w_{s+1}(t=0)$}; 
\node[fill=black,circle,inner sep=1.9] (s04) at (s4) {};
\node [below right=-2mm and 1mm, rotate=\kang-90] (ss4) at (s04) {$w_S(t=0)$};
\draw[black, line width=0.4mm, draw opacity=0.3] (s2) -- (s3);
\draw[black, line width=0.4mm, draw opacity=0.3] (s1) -- (si1);
\draw[black, line width=0.4mm, draw opacity=0.3] (si3) -- (s4);

\coordinate (p1) at (s2);
\coordinate (p1a) at ($ (p1)+(\ang:\xe) $);
\coordinate (p1b) at ($ (p1)+(\mang:\xe) $);
\coordinate (p2) at ($ (p1a)+(\ang:0.6) $);
\coordinate (p2a) at ($ (p2)+(\ang:\xe) $);
\coordinate (p2b) at ($ (p2)+(\sang:\xe) $); 
\coordinate (p2c) at ($ (p2)+(\mang:\xe) $); 
\coordinate (p3) at ($ (p2a)+(\ang:\xi) $);
\coordinate (p3a) at ($ (p3)+(\ang:\xe) $);
\coordinate (p3b) at ($ (p3)+(\tang:\xe) $);
\coordinate (p3c) at ($ (p3)+(\mang:\xe) $);
\coordinate (n0) at ($ (s1)+(\bang:\xn) $);
\coordinate (n1) at ($ (p1)+(\bang:\xn) $);
\coordinate (n2) at ($ (p2)+(\bang:\xn) $);
\coordinate (n3) at ($ (p3)+(\bang:\xn) $);
\coordinate (d0) at ($ (s1)+(\dang:\xd) $);
\coordinate (d1) at ($ (p1)+(\dang:\xd) $);
\coordinate (d2) at ($ (p2)+(\dang:\xd) $);
\coordinate (d3) at ($ (p3)+(\dang:\xd) $);

\draw[name path=pb, black, line width=0.4mm, draw opacity=0.7] (s2) -- (s3);
\draw[black, line width=0.4mm, draw opacity=0.7] (s1) -- (si1);
\path[name path=fb, black, line width=0.4mm, draw opacity=0.7] (s1) -- (s2);
\draw[name path=gb, black, line width=0.4mm, draw opacity=0.7] (si3) -- (s4);  
  \draw [name path=pa, thick , ->, opacity=0.0]
  (p1) .. controls ($ (s2) + (37:2.1) $)  .. ($ (s3) + (125:0.6) $);
  \draw [name path=fa, thick , ->, opacity=0.0]
  (s1) .. controls ($ (s1) + (80:1.1) $)  .. ($ (s2) + (162:0.50) $);
  \draw [name path=ga, thick , ->, opacity=0.0]
  (s3) .. controls ($ (s3) + (20:1) $)  .. ($ (s4) + (105:0.4) $);
\tikzfillbetween[of=pa and pb]{orange, opacity=0.2};
\tikzfillbetween[of=fa and fb]{orange, opacity=0.2}; 
\tikzfillbetween[of=ga and gb]{orange, opacity=0.2};
    
\draw[vector, ->, dpk] (p1) -- (p1b) node[scale=1,above left=-5mm and 2mm, draw opacity=0.5] {$\nabla f_{w_s(t=0)}(x)$};
\begin{scope}
\clip (p1) -- (p1b) -- ($(p1b) + (90+\mang:0.1) $) -- ($ (p1) + (90+\mang:0.1) $);

\draw[vector, ->, epkt] (p1) -- (p1b) node[scale=1,above left=-5mm and 2mm, draw opacity=0.5] {$\nabla f_{w_s(t=0)}(x)$};

\end{scope}
\draw[vector, ->, epk] (p1) -- (p1a) node[scale=1,below right=1mm and -2mm, draw opacity=0.5, rotate=\kang-90] {$\nabla f_{w_s(t=0)}(X)$};
\draw[vector, ->, epkt] (p2) -- (p2b) node[scale=1,above left=-5mm and 2mm, draw opacity=0.5] {$\nabla f_{w_s(t=0.4)}(x)$};
\draw[vector, ->, dpk] (p2) -- (p2c) node[scale=1,above left=-5mm and 2mm, draw opacity=0.5] {};
\draw[vector, ->, epk] (p2) -- (p2a) node[scale=1,below right=3mm and -4mm, draw opacity=0.5, rotate=\kang-90] {$\nabla f_{w_s(t=0)}(X)$};
\draw[vector, ->, epkt] (p3) -- (p3b) node[scale=1,above left=-5mm and 2mm, draw opacity=0.5] {$\nabla f_{w_s(t=0.7)}(x)$};
\draw[vector, ->, dpk] (p3) -- (p3c) node[scale=1,above left=-5mm and 2mm, draw opacity=0.5] {};
\draw[vector, ->, epk] (p3) -- (p3a) node[scale=1,below right=4mm and -6mm, draw opacity=0.5, rotate=\kang-90] {$\nabla f_{w_s(t=0)}(X)$};
\draw[->,line width=0.2mm, xcol, diffpk] (p2c) -- (p2b) node[scale=1,below right=1mm and -2mm, draw opacity=0.5] {};
\draw[->,line width=0.2mm, xcol, diffpk] (p3c) -- (p3b) node[scale=1,below right=1mm and -2mm, draw opacity=0.5] {};

    

\end{tikzpicture}
\caption{Comparison of test gradients used by Discrete Path Kernel (DPK) from prior work (Blue) and the Exact Path Kernel (EPK) proposed in this work (green) versus total training vectors (black) used for both kernel formulations along a discrete training path with $S$ steps. Orange shading indicates cosine error of DPK test gradients versus EPK test gradients shown in practice in Fig.~\ref{fig:error}. }
\label{fig:vecs}
\end{figure}
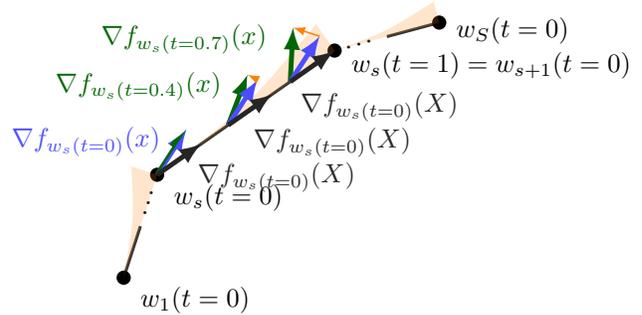

\begin{figure}[!ht]
        \centering
        \includegraphics[width=1.01\linewidth]{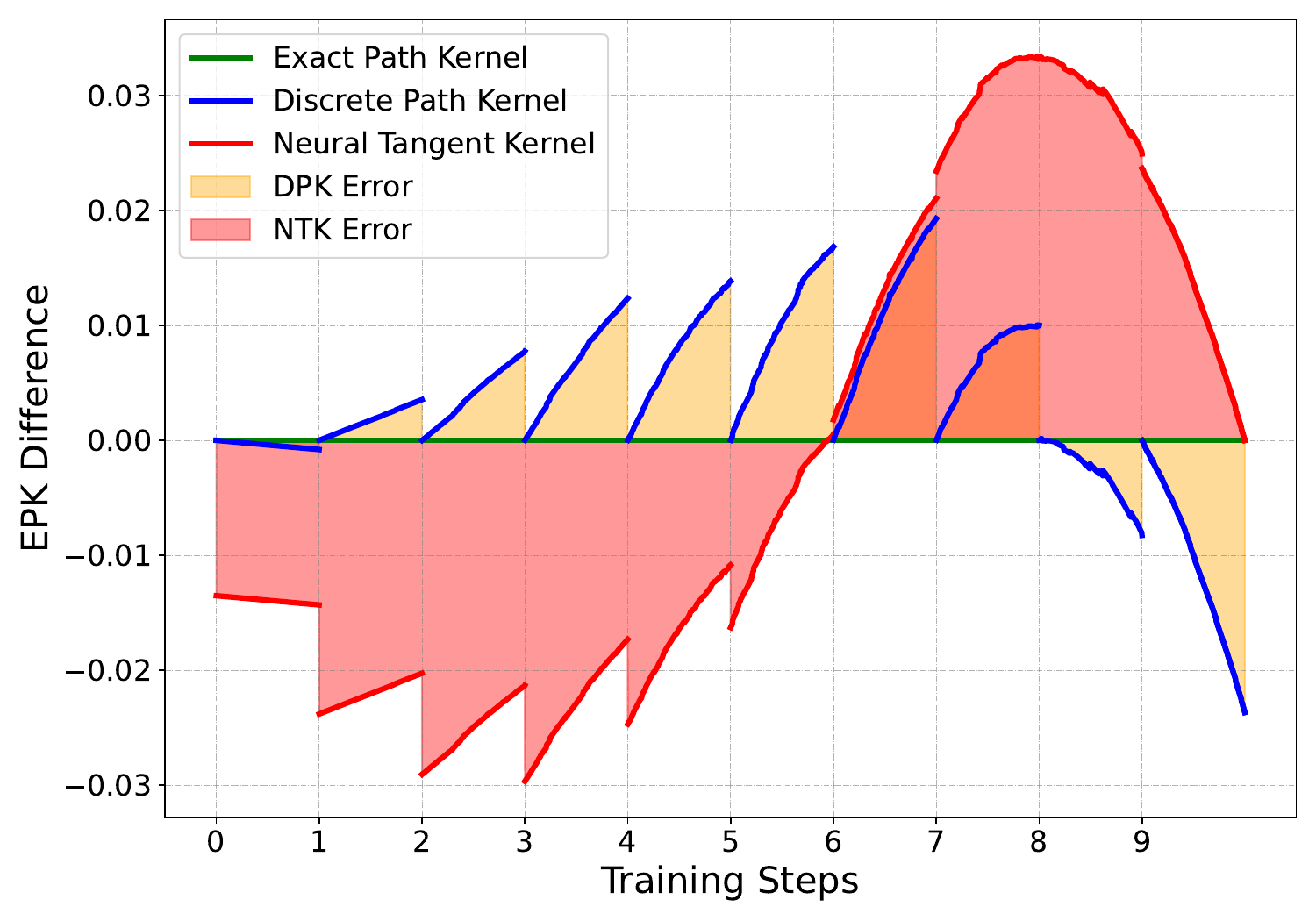}

        \caption{Measurement of gradient alignment on test points across the training path. The EPK is used as a frame of reference. The y-axis is exactly the difference between the EPK and other representations. For example $EPK-DPK = \langle \phi_{s,t}(X), \phi_{s,t}(x) - \phi_{s,0}(x) \rangle$ (See Definition 3.4). Shaded regions indicate total accumulated error. Note: this is measuring an angle of error in weight space; therefore, equivalent positive and negative error will not result in zero error.}
        \label{fig:error}
\end{figure}

There has recently been a surge of interest in the connection between neural networks and kernel methods~\cite{bietti2019bias, du2019graphntk, tancik2020fourierfeatures, abdar2021uq, geifman2020similarity, chen2020generalized, alemohammad2021recurrent}. Much of this work has been motivated by the the neural tangent kernel (NTK), which describes the training dynamics of neural networks in the infinite limit of network width~\cite{jacot2018neural}.
We argue that many intriguing behaviors arise in the \emph{finite} parameter regime~\cite{DBLP:conf/nips/BubeckS21}. 
All prior works, to the best of our knowledge, appeal to discrete approximations of the kernel corresponding to a neural network. 
Specifically, prior approaches are derived under the assumption that training step size is small enough to guarantee close approximation of a gradient flow
~\cite{ghojogh2021, shawe2004kernel, zhao2005extracting}.

In this work, we show that the simplifying assumptions used in prior works (i.e. infinite network width and infinitesimal gradient descent steps) are not necessary. Our \textbf{Exact Path Kernel (EPK)} provides the first, exact method to study the behavior of finite-sized neural networks used for classification.
Previous results are limited in application ~\cite{incudini2022quantum} due to dependence of the kernel on test data unless strong conditions are imposed on the training process as by ~\cite{chen2021equivalence}. We show, however, that the training step sizes used in practice do not closely follow this gradient flow, introducing significant error into all prior approaches (Figure~\ref{fig:error}).

Our experimental results build on prior studies attempting to evaluate empirical properties of the kernels corresponding to finite neural networks ~\cite{DBLP:conf/iclr/LeeBNSPS18, chen2021equivalence}. While the properties of infinite neural networks are fairly well understood~\cite{neal1996priors}, we find that the kernels learned by finite neural networks have non-intuitive properties that may explain the failures of modern neural networks on important tasks such as robust classification and calibration on out-of-distribution data.

This paper makes the following significant theoretical and experimental contributions:
\begin{enumerate}
    \item We prove that finite-sized neural networks trained with finite-sized gradient descent steps and cross-entropy loss can be exactly represented as kernel machines using the EPK. Our derivation incorporates a previously-proposed path kernel, but extends this method to account for practical training procedures~\cite{domingos2020every, chen2021equivalence}.
  
    \item We demonstrate that it is computationally tractable to estimate the kernel underlying a neural network classifier, including for small convolutional computer vision models.
    \item We compute Gram matrices using the EPK and use them to illuminate prior theory of neural networks and their understanding of uncertainty. 
    \item We employ Gaussian processes to compute the covariance of a neural network's logits and show that this reiterates previously observed shortcomings of neural network generalization.
\end{enumerate}

\section{Related Work}

Fundamentally, the neural tangent kernel (NTK) is rooted in the concept that all information necessary to represent a parametric model is stored in the Hilbert space occupied by the model's weight gradients up to a constant factor. 
This is very well supported in infinite width ~\cite{jacot2018neural}. 
In this setting, it has been shown that neural networks are equivalent to support vector machines, drawing a connection to maximum margin classifiers ~\cite{chen2021equivalence, chizat2020maxmargin}.
Similarly, Shah et al. demonstrate that this maximum margin classifier exists in Wasserstien space; however, they also show that model gradients may not contain the required information to represent this ~\cite{shah2021input}.

The correspondence between kernel machines and parametric models trained by gradient descent has been previously developed in the case of a continuous training path (i.e. the limit as gradient descent step size $\varepsilon \to 0$)
~\cite{domingos2020}. We will refer to the previous finite approximation of this kernel as the Discrete Path Kernel (DPK).
However, a limitation of this formulation is its reliance on a continuous integration over a gradient flow, which differs from the discrete forward Euler steps employed in real-world model training. 
This discrepancy raises concerns regarding the applicability of the continuous path kernel to practical scenarios ~\cite{incudini2022quantum}.
Moreover, the formulation of the sample weights and bias term in the DPK depends on its test points. Chen et al. propose that this can be addressed, in part, by imposing restrictions on the loss function used for training, but did not entirely disentangle the kernel formulation from sample importance weights on training points ~\cite{chen2021equivalence}.

We address the limitations of \citet{domingos2020} and \citet{chen2021equivalence} in Subsection 
~\ref{subsec:disc}. By default, their approach produces a system which can be viewed as an ensemble of kernel machines, but without a single aggregated kernel which can be analyzed directly. ~\citet{chen2021equivalence} propose that the resulting sum over kernel machines can be formulated as a kernel machine so long as the sign of the gradient of the loss stays constant through training; however, we show that this is not necessarily a sufficient restriction. Instead, their formulation leads to one of several non-symmetric functions which can serve as a surrogate to replicate a given models behavior, but without retaining properties of a kernel.

\section{Theoretical Results}

Our goal is to show an equivalence between any given finite parametric model trained with gradient descent $f_w(x)$  (e.g. neural networks) and a kernel based prediction that we construct. We define this equivalence in terms of the output of the parametric model $f_w(x)$ and our kernel method in the sense that they form identical maps from input to output. In the specific case of neural network classification models, we consider the mapping $f_w(x)$ to include all layers of the neural network up to and including the log-softmax activation function. Formally:
\begin{definition}
A {kernel} is a function of two variables which is symmetric and positive semi-definite. 
\end{definition}

\begin{definition}
Given a Hilbert space $X$, a test point $x \in X$, and a training set $X_T = \{x_1,x_2,...x_n\} \subset X$ indexed by $I$, a \emph{Kernel Machine} is a model characterized by 
\begin{align}
    \text{K}(x) = b + \sum_{i\in I} a_i k(x,x_i)
\end{align}
where the $a_i \in \mathbb{R}$ do not depend on $x$, $b \in \mathbb{R}$ is a constant, and $k$ is a kernel. ~\cite{rasmussen2006gaussian}

By Mercer's theorem ~\cite{ghojogh2021} a kernel can be produced by composing an inner product on a Hilbert space with a mapping $\phi$ from the space of data into the chosen Hilbert space.
We use this property to construct a kernel machine of the following form.
\begin{align}
    \text{K}(x) = b + \sum_{i\in I} a_i \langle \phi(x), \phi(x_i) \rangle
\end{align}
\end{definition}
Where $\phi$ is a function mapping input data into the weight space via gradients. Our $\phi$ will additionally differentiate between test and training points to resolve a discontinuity that arises under discrete training. 

\subsection{Exact Path Kernels}

 We first derive a kernel which is an exact representation of the change in model output over one training step, and then compose our final representation by summing along the finitely many steps.
Models trained by gradient descent can be characterized by a discrete set of intermediate states in the space of their parameters.
These discrete states are often considered to be an estimation of the gradient flow, however in practical settings where $\epsilon \not \rightarrow 0$ these discrete states differ from the true gradient flow.
Our primary theoretical contribution is an algorithm which accounts for this difference by observing the true path the model followed during training.
Here we consider the training dynamics of practical gradient descent steps by integrating a discrete path for weights whose states differ from the gradient flow induced by the training set.

\textbf{Gradient Along Training Path vs Gradient Field:}
In order to compute the EPK, gradients on training data must serve two purposes. 
First, they are the reference points for comparison (via inner product) with test points. 
Second, they determine the path of the model in weight space. 
In practice, the path followed during gradient descent does not match the gradient field exactly. 
Instead, the gradient used to move the state of the model forward during training is only computed for finitely many discrete weight states of the model.
In order to produce a path kernel, we must \textit{continuously} compare the model's gradient at test points with \textit{fixed} training gradients along each discrete training step $s$ whose weights we we interpolate linearly by $w_s(t) = w_s - t(w_s - w_{s+1})$. We will do this by integrating across the gradient field induced by test points, but holding each training gradient fixed along the entire discrete step taken. This creates an asymmetry, where test gradients are being measured continuously but the training gradients are being measured discretely (see Figure~\ref{fig:vecs}).

To account for this asymmetry in representation, we will redefine our data using an indicator to separate training points from all other points in the input space.
\begin{definition}
\label{fpm}
Let $X$ be two copies of a Hilbert space $H$ with indices $0$ and $1$ so that $X = H \times \{0,1\}$. We will write $x \in H \times \{0,1\}$ so that $x = (x_H, x_I)$ (For brevity, we will omit writing $_H$ and assume each of the following functions defined on $H$ will use $x_H$ and $x_I$ will be a hidden indicator).
Let $ f_{w}$ be a differentiable function on $H$ parameterized by $w \in \mathbb{R}^d$. Let $X_T = \{(x_i, 1)\}_{i=1}^M$ be a finite subset of $X$ of size $M$ with corresponding observations $Y_T = \{y_{x_i}\}_{i=1}^M$ with initial parameters $w_0$ so that there is a constant $b \in \mathbb{R}$ such that for all $x$, $ f_{w_0}(x) = b$. Let $L$ be a differentiable loss function of two values which maps $(f(x), y_x)$ into the positive real numbers. Starting with $f_{w_0}$, let $\{w_s\}$ be the sequence of points attained by $N$ forward Euler steps of fixed size $\varepsilon$ so that $w_{s+1} = w_{s} - \varepsilon \nabla L(f(X_T), Y_T)$. Let $x \in H \times \{0\}$ be arbitrary and within the domain of $f_w$ for every $w$. Then $f_{w_s(t)}$ is a \emph{finite parametric gradient model (FPGM)}. 
\end{definition}

\begin{definition}
\label{epk}

Let $f_{w_s(t)}$ be an FPGM with all corresponding assumptions. Then, for a given training step $s$, the \emph{exact path kernel} (EPK) can be written  
\begin{equation}
 K_{\text{EPK}}(x, x', s) = \int_0^1\langle \phi_{s,t}(x), \phi_{s,t}(x')\rangle dt
 \label{eq2}
\end{equation}
where
\begin{align}
\phi_{s, t}(x) &=  \nabla_w f_{w_s(t,x)} (x)\\
w_s(t) &= w_s - t(w_s - w_{s+1})\\
w_s(t,x) &= \begin{cases} w_s(0), & \text{if } x_I = 1\\ w_s(t), & \text{if } x_I = 0 \end{cases}
\end{align}
\textbf{Note:} $\phi$ is deciding whether to select a continuously or discrete gradient based on whether the data is from the training or testing copy of the Hilbert space $H$. This is due to the inherent asymmetry that is apparent from the derivation of this kernel (see Appendix section~\ref{proof:eker}). This choice avoids potential discontinuity in the kernel output when a test set happens to contain training points. 
\end{definition}
\begin{restatable}{lemma}{ker}
The exact path kernel (EPK) is a kernel.
\end{restatable}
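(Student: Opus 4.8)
The plan is to verify the two defining properties of a kernel from Definition 1 directly from the integral representation in Equation~\eqref{eq2}: symmetry and positive semi-definiteness. The key observation is that $K_{\text{EPK}}(\cdot,\cdot,s)$ is built pointwise (in $t$) from an honest inner product $\langle \phi_{s,t}(x), \phi_{s,t}(x')\rangle$ on $\mathbb{R}^d$, and both properties are preserved under the nonnegative-weighted integration $\int_0^1 \cdot\, dt$. So the whole argument reduces to checking that for each fixed $t$, the map $(x,x') \mapsto \langle \phi_{s,t}(x), \phi_{s,t}(x')\rangle$ is a symmetric PSD kernel, and then invoking closure of the class of kernels under integration.

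First I would handle symmetry: for fixed $s$ and $t$, $\langle \phi_{s,t}(x), \phi_{s,t}(x')\rangle = \langle \phi_{s,t}(x'), \phi_{s,t}(x)\rangle$ because the Euclidean inner product on $\mathbb{R}^d$ is symmetric; integrating over $t \in [0,1]$ preserves this, so $K_{\text{EPK}}(x,x',s) = K_{\text{EPK}}(x',x,s)$. Here it is important that $\phi_{s,t}(x)$ is a \emph{well-defined} function of $x$ alone (the branch $w_s(t,x)$ selects $w_s(0)$ or $w_s(t)$ purely according to the indicator $x_I$), so there is no hidden dependence that would break symmetry — this is exactly the reason the data space was doubled in Definition~\ref{fpm}. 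Second, for positive semi-definiteness, take any finite collection $x^{(1)},\dots,x^{(m)}$ and scalars $c_1,\dots,c_m \in \mathbb{R}$; then
\begin{align}
\sum_{j,\ell} c_j c_\ell K_{\text{EPK}}(x^{(j)},x^{(\ell)},s)
&= \int_0^1 \sum_{j,\ell} c_j c_\ell \langle \phi_{s,t}(x^{(j)}), \phi_{s,t}(x^{(\ell)})\rangle\, dt \\
&= \int_0^1 \Norm{\sum_j c_j \phi_{s,t}(x^{(j)})}^2 dt \;\ge\; 0,
\end{align}
using linearity of the inner product to pull the sum inside as a squared norm, and monotonicity of the integral on the nonnegative integrand. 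This gives PSD; together with symmetry it shows $K_{\text{EPK}}(\cdot,\cdot,s)$ is a kernel in the sense of Definition 1.

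The main obstacle is not the algebra but the measure-theoretic bookkeeping needed to interchange $\int_0^1$ and the finite sum, and to know the integral exists at all: one needs $t \mapsto \langle \phi_{s,t}(x), \phi_{s,t}(x')\rangle$ to be (Lebesgue or Riemann) integrable on $[0,1]$. I would discharge this by appealing to the standing assumptions on the FPGM — $f_w$ is differentiable in $w$, the interpolation $w_s(t) = w_s - t(w_s - w_{s+1})$ is continuous (indeed affine) in $t$, and $x$ lies in the domain of $f_w$ for every $w$ — so that $t \mapsto \nabla_w f_{w_s(t)}(x)$ is continuous, hence the integrand is continuous on $[0,1]$ and the interchange with a finite sum is immediate (no dominated-convergence subtleties, since the sum is finite and the interval compact). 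It is worth flagging that mild extra regularity (e.g. continuity, or at least local boundedness, of $w \mapsto \nabla_w f_w(x)$) is implicitly being used; if one only wants a kernel for a \emph{fixed} $s$ this is harmless, and the later aggregation over the finitely many steps $s$ inherits symmetry and PSD termwise by the same closure properties.
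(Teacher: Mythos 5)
Your proposal is correct and follows essentially the same route as the paper's own proof: symmetry from the symmetry of the inner product with the same map $\phi_{s,t}$ on both arguments, and positive semi-definiteness by pulling the coefficients inside the inner product to obtain $\int_0^1 \lVert \sum_j c_j \phi_{s,t}(x^{(j)})\rVert^2\, dt \ge 0$. The only addition is your explicit integrability/interchange justification, which the paper leaves implicit.
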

\begin{restatable}[Exact Kernel Ensemble Representation]{theorem}{eker}
\label{thm:eker}
A model $f_{w_N}$ trained using discrete steps matching the conditions of the exact path kernel has the following exact representation as an ensemble of $N$ kernel machines:
\begin{equation}
f_{w_N} = \text{KE}(x) :=  \sum_{s = 1}^N \sum_{i = 1}^{M} a_{i,s} K_{\text{EPK}}(x, x', s) + b
\label{ensemble}
\end{equation}
where
\begin{align}
a_{i, s} &= -\varepsilon  \dfrac{d L(f_{w_s(0)}(x_i),  y_i)}{d f_{w_s(0)}(x_i)} \\
b &= f_{w_0}(x)
\end{align}
\end{restatable}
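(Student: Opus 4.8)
The plan is to establish the identity \eqref{ensemble} by a telescoping argument over the $N$ discrete training steps, applying the fundamental theorem of calculus along each linear weight segment $t\mapsto w_s(t)$ and then expanding the training-set loss gradient by the chain rule. First I would write the telescoping decomposition over the $N$ increments,
\begin{equation*}
f_{w_N}(x) = f_{w_0}(x) + \sum_{s} \bigl(f_{w_{s+1}}(x) - f_{w_s}(x)\bigr),
\end{equation*}
and observe that $f_{w_0}(x) = b$ is a genuine constant by the FPGM hypothesis of Definition~\ref{fpm}. Fixing a step $s$, since $w_s(t) = (1-t)w_s + t\,w_{s+1}$ is a smooth curve and $f_w(x)$ is differentiable in $w$ (FPGM hypothesis), the fundamental theorem of calculus gives $f_{w_{s+1}}(x) - f_{w_s}(x) = \int_0^1 \bigl\langle \nabla_w f_{w_s(t)}(x),\, \tfrac{d}{dt}w_s(t)\bigr\rangle\, dt$, where $\tfrac{d}{dt}w_s(t) = w_{s+1} - w_s = -\varepsilon\,\nabla_w L(f_{w_s}(X_T), Y_T)$ by the forward-Euler update rule.

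Next I would expand the update direction. Writing the empirical loss additively, $L(f_{w_s}(X_T), Y_T) = \sum_{i=1}^M L(f_{w_s}(x_i), y_i)$ (as holds for cross-entropy and any separable loss), the chain rule yields $\nabla_w L(f_{w_s}(X_T),Y_T) = \sum_{i=1}^M \tfrac{dL(f_{w_s}(x_i),y_i)}{d f_{w_s}(x_i)}\,\nabla_w f_{w_s}(x_i)$. Substituting and exchanging the finite sum with the integral,
\begin{equation*}
f_{w_{s+1}}(x) - f_{w_s}(x) = -\varepsilon \sum_{i=1}^M \frac{dL(f_{w_s}(x_i),y_i)}{d f_{w_s}(x_i)} \int_0^1 \bigl\langle \nabla_w f_{w_s(t)}(x),\, \nabla_w f_{w_s}(x_i)\bigr\rangle\, dt .
\end{equation*}
The crucial bookkeeping step is to recognize the remaining integral as $K_{\text{EPK}}(x,x_i,s)$: because $x$ is a test point ($x_I = 0$) we have $\nabla_w f_{w_s(t)}(x) = \phi_{s,t}(x)$, while because $x_i$ is a training point ($x_I = 1$) the map $\phi_{s,t}(x_i)$ is \emph{constant} in $t$ and equals $\nabla_w f_{w_s(0)}(x_i) = \nabla_w f_{w_s}(x_i)$ — this is precisely the test/train asymmetry encoded in Definition~\ref{epk}. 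Hence the integral equals $\int_0^1 \langle \phi_{s,t}(x), \phi_{s,t}(x_i)\rangle\, dt = K_{\text{EPK}}(x,x_i,s)$, and with $a_{i,s} := -\varepsilon\,\tfrac{dL(f_{w_s(0)}(x_i),y_i)}{d f_{w_s(0)}(x_i)}$ (using $w_s(0) = w_s$) the $s$-th increment is exactly $\sum_{i=1}^M a_{i,s} K_{\text{EPK}}(x,x_i,s)$. Summing over the $N$ steps and restoring $b = f_{w_0}(x)$ yields \eqref{ensemble}.

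Finally I would note why this is an \emph{ensemble} rather than a single kernel machine: each summand uses its own kernel $K_{\text{EPK}}(\cdot,\cdot,s)$ (a kernel by the preceding lemma) and its own coefficients $a_{i,s}$, which are independent of the test point $x$ since $w_s(0)$ does not depend on $x$; thus each of the $N$ terms is a bona fide kernel machine in the sense of the earlier definition, and their sum plus the constant $b$ is the claimed representation. I do not anticipate a genuine obstacle — the skeleton is just telescoping, the fundamental theorem of calculus, and the chain rule — but the step demanding the most care is the asymmetry in $\phi_{s,t}$: one must verify that the integration variable $t$ advances only the test-point gradient while the training-point gradients stay frozen at $w_s(0)$, which is exactly what the derivative of the piecewise-linear weight path produces and is the reason the separated domain $H \times \{0,1\}$ was introduced in Definition~\ref{fpm}. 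A secondary point worth stating explicitly is that additive separability of $L$ over training points is what licenses the chain-rule expansion of the update direction, and differentiability of $f_w$ in $w$ is what licenses the fundamental theorem of calculus on each segment.
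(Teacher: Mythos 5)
Your proposal is correct and follows essentially the same route as the paper's proof: a telescoping sum over steps, the fundamental theorem of calculus along each linear segment $w_s(t)$, the chain-rule expansion of the forward-Euler update direction into per-training-point loss derivatives times $\nabla_w f_{w_s(0)}(x_i)$, and the identification of the resulting integral with $K_{\text{EPK}}(x,x_i,s)$ via the frozen-training/continuous-test asymmetry of $\phi_{s,t}$. The paper merely presents the same computation in the order ``differentiate $\hat y$ in $t$, then integrate,'' whereas you state the telescoping first; the content is identical.
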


\begin{sproof}
Assuming the theorem hypothesis, we'll measure the change in model output as we interpolate across each training step $s$ by measuring the change in model state along a linear parametrization $w_s(t) = w_s - t(w_s - w_{s+1})$. We will let $d$ denote the number of parameters of $f_w$. For brevity, we define $L(x_i, y_i)= l(f_{w_s(0)}(x_i),  y_i)$ where $l$ is the loss function used to train the model.
\begin{align}
    \dfrac{d \hat y}{dt} &= \sum_{j = 1}^{d} \dfrac{d \hat y}{\partial w_j} \dfrac{d w_j}{dt}\\
&= \sum_{j = 1}^{d} \dfrac{d f_{w_s(t)}(x)}{\partial w_j} \left(-\varepsilon \sum_{i = 1}^{M}\dfrac{\partial L(x_i, y_i)}{\partial f_{w_s(0)}(x_i)}\dfrac{\partial f_{w_s(0)}(x_i)}{\partial w_j}\right) \label{eq11}
\end{align}
We use fundamental theorem of calculus to integrate this equation from step $s$ to  step $s+1$ and then add up across all steps. See Appendix~\ref{proof:eker} for the full proof.
\end{sproof}

\textbf{Remark ~\remlabel{rem:init}} Note that in this formulation, $b$ depends on the test point $x$.
In order to ensure information is not being leaked from the kernel into this bias term the model $f$ must have constant output for all input. 
When relaxing this property, to allow for models that have a non-constant starting output, but still requiring $b$ to remain constant, we note that this representation ceases to be exact for all $x$.
The resulting approximate representation has logit error bounded by its initial bias which can be chosen as $b = \text{mean}(f_{w_0(0)}(X_T))$.
Starting bias can be minimized by starting with small parameter values which will be out-weighed by contributions from training.
In practice, we sidestep this issue by initializing all weights in the final layer to $0$, resulting in $b=\text{log}(\text{softmax}(0))$, thus removing $b$'s dependence on $x$.

\textbf{Remark ~\remlabel{rem:exact}} 
The exactness of this proof hinges on the \emph{separate} measurement of how the model's parameters change.
The gradients on training data, which are fixed from one step to the next, measure how the parameters are changing.
This is opposed to the gradients on test data, which are \textit{not} fixed and vary with time.
These measure a continuous gradient field for a given point.
We are using interpolation as a way to measure the difference between the step-wise linear training path and the continuous loss gradient field. 

\newcommand{\pluseq}{\mathrel{+}=}
\begin{algorithm*}[h]
    \caption{Exact Path Kernel: Given a training set $(X, Y)$ with $M$ data points, a testing point $x$ and $N$ weight states $\{w_0, w_1 ... w_N\}$, the kernel machine corresponding to the exact path kernel can be calculated for a model with $W$ weights and $K$ outputs. We estimate the integral across test points by calculating the Riemann sum with sufficient steps ($T$) to achieve machine precision. For loss functions that do not have constant gradient values throughout training, this algorithm produces an ensemble of kernel machines.}
    \label{alg:exact}
\begin{algorithmic}
    \STATE $b = f(w_0, x)$
    \FOR{$s=0$ \textbf{to} $N$}
    \STATE $J^{X} = \nabla_{w} f_{w_s(0)}(X)$ \hfill \COMMENT{Jacobian of training point outputs w.r.t model weights $[M \times K \times W]$ }
    \FOR{t \textbf{from} 0 \textbf{to} 1 \textbf{with step} 1/T}
    \STATE $w_s(t) = w_s + t(w_{s+1} - w_s)$
    \STATE $J^{x} \pluseq \dfrac{1}{T} \nabla_{w} f_{w_s(t)}(x)$ \hfill \COMMENT{Jacobian of testing point output w.r.t model weights averaged across $T$ steps $ [K \times W]$}
    \ENDFOR
    \STATE $G_{ijk} =  \sum_w J_{ijw}^{X} J_{kw}^x$ \hfill \COMMENT{Inner product on the weight space, this is the kernel value $[M \times K \times K]$}
    \STATE $L' = \nabla_{f}L(f_{w_s(0)}(X), Y)$ \hfill \COMMENT{Jacobian of loss w.r.t model output of training points $[M \times K]$}
    \STATE $P^s_{ik} = \sum_j L'_{ij} G_{ijk}$ \hfill \COMMENT{Inner product of kernel value scaled by loss gradients $[M \times K$]}
    \ENDFOR
    \STATE $\mathcal{P}_{sik} = \{P^0, P^1, ..., P^N\}$ \hfill \COMMENT{Stack values across all training steps $[N \times M \times K]$}
    \STATE $\hat p = -\varepsilon \dfrac{1}{M} \sum_s \sum_i \mathcal{P}_{sik} + b$ \hfill \COMMENT{Sum across training steps and average across training points for final prediction $[K]$}
\end{algorithmic}
\end{algorithm*}

\begin{restatable}[Exact Kernel Machine Reduction]{theorem}{ekr}
\label{thm:ekr}
Let $\nabla L(f(w_{s}(x), y)$ be constant  across steps $s$, $(a_{i,s}) = (a_{i,0})$. Let the kernel across all $N$ steps be defined as $K_{\text{NEPK}}(x,x') = \sum_{s = 1}^N a_{i,0} K_{\text{EPK}}(x, x', s)$ Then the exact kernel ensemble representation for $f_{w_N}$ can be reduced exactly to the kernel machine representation:
\begin{equation}
f_{w_N}(x) = \text{KM}(x) := b + \sum_{i = 1}^{M} a_{i,0} K_{\text{NEPK}}(x,x')
\label{exact}
\end{equation}
\end{restatable}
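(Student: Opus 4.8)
The plan is to obtain this statement as an immediate corollary of the Exact Kernel Ensemble Representation (Theorem~\ref{thm:eker}): the constancy hypothesis is exactly what is needed to collapse the ensemble of $N$ kernel machines into a single one. So I would begin from the ensemble form supplied by that theorem,
\begin{equation}
f_{w_N}(x) = b + \sum_{s=1}^{N}\sum_{i=1}^{M} a_{i,s}\,K_{\text{EPK}}(x, x_i, s),
\end{equation}
with $a_{i,s} = -\varepsilon\, dL(f_{w_s(0)}(x_i), y_i)/df_{w_s(0)}(x_i)$ and $b = f_{w_0}(x)$, and then invoke the hypothesis $a_{i,s} = a_{i,0}$ for every step $s$.

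Substituting the hypothesis and interchanging the two (finite) sums — which needs no justification beyond finiteness — gives $f_{w_N}(x) = b + \sum_{i=1}^{M} a_{i,0}\sum_{s=1}^{N} K_{\text{EPK}}(x, x_i, s)$. I would then simply name the inner sum $K_{\text{NEPK}}(x, x_i) := \sum_{s=1}^{N} K_{\text{EPK}}(x, x_i, s)$, which reproduces the asserted single-kernel-machine form $\text{KM}(x) = b + \sum_{i=1}^{M} a_{i,0}\,K_{\text{NEPK}}(x, x_i)$.

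The only content that actually has to be verified is that $\text{KM}$ is a genuine kernel machine in the sense of the kernel-machine definition: (i) $K_{\text{NEPK}}$ is symmetric and positive semi-definite, which follows because it is a finite sum of the functions $K_{\text{EPK}}(\cdot,\cdot,s)$, each a kernel by the preceding Lemma, and the class of kernels is closed under finite sums — symmetry is inherited termwise, and for any points $z_1,\dots,z_m$ and reals $c_1,\dots,c_m$ one has $\sum_{j,l} c_j c_l K_{\text{NEPK}}(z_j,z_l) = \sum_{s=1}^{N}\int_0^1 \Norm{\sum_j c_j \phi_{s,t}(z_j)}^2\,dt \ge 0$; (ii) the coefficients $a_{i,0}$ do not depend on the test point $x$, which is immediate from their closed form; and (iii) $b = f_{w_0}(x)$ is a true constant, which holds under the FPGM hypothesis that $f_{w_0}$ is constant on $H$ (the initialization caveat of Remark~\remref{rem:init}).

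I expect no real obstacle in the argument itself — it is bookkeeping layered on Theorem~\ref{thm:eker}. The genuine subtlety lies in the hypothesis: demanding that the full loss-gradient vector (not merely its sign) be constant across all training steps is a strong assumption and, as discussed when this requirement is relaxed (Subsection~\ref{subsec:disc}), does not hold generically for cross-entropy training, so the scope of the reduction must be stated carefully. A secondary point worth care is that $a_{i,0}$ should be kept \emph{outside} $K_{\text{NEPK}}$ rather than folded into it: $a_{i,0}$ carries the index $i$ of one of the kernel's two arguments, so a function with that factor built in would in general be neither symmetric nor positive semi-definite, and it is precisely leaving it outside that lets the final object match the kernel-machine definition.
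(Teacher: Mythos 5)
Your proposal is correct and follows essentially the same route as the paper's own argument: substitute the constancy hypothesis $a_{i,s}=a_{i,0}$ into the ensemble form of Theorem~\ref{thm:eker}, interchange the finite sums over $i$ and $s$, and identify the inner sum over $s$ as a single kernel, with positive semi-definiteness inherited termwise from the per-step EPK. Your side remark that $a_{i,0}$ must stay outside $K_{\text{NEPK}}$ is well taken --- the theorem statement's displayed definition of $K_{\text{NEPK}}$ folds $a_{i,0}$ in (double-counting it in Equation~\eqref{exact}), whereas the paper's appendix, like you, defines the summed kernel without that factor.
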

See Appendix~\ref{proof:ekmr} for full proof. By combining theorems ~\ref{thm:eker} and ~\ref{thm:ekr}, we can construct an exact kernel machine representation for any arbitrary parameterized model trained by gradient descent which satisfies the additional property of having constant loss across training steps (e.g. any ANN using catagorical cross-entropy loss (CCE) for classification). This representation will produce exactly identical output to the model across  the model's entire domain. This establishes exact kernel-neural equivalence for classification ANNs. Furthermore, Theorem ~\ref{thm:eker} establishes an exact kernel ensemble representation without limitation to models using loss functions with constant derivatives across steps. It remains an open problem to determine other conditions under which this ensemble may be reduced to a single kernel representation.  

\subsection{Discussion}

$\phi_{s,t}(x)$ depends on both $s$ and $t$, which is non-standard but valid, however an important consequence of this mapping is that the output of this representation is not guaranteed to be continuous. This discontinuity is exactly measuring the error between the model along the exact path compared with the gradient flow for each step. 

We can write another function $k'$ which is continuous but not symmetric, yet still produces an exact representation:
\begin{align}
k'(x, x') = \langle \nabla_w f_{w_s(t)}(x), \nabla_w f_{w_s(0)}(x')\rangle
\end{align}
The resulting function is a valid kernel if and only if for every $s$ and every $x$, 
\begin{align}
\label{eq:cond}
    \int_0^1 \nabla_w f_{w_s(t)}(x)dt = \nabla_w f_{w_s(0)}(x)
\end{align}

We note that since $f$ is being trained using forward Euler, we can write:
\begin{align}
    \dfrac{\partial w_s(t)}{dt} &= -\varepsilon \nabla_w L(f_{w_s(0)}(x_i), y_i) \label{dstep}
\end{align}
In other words, our parameterization of this step depends on the step size $\varepsilon$ and as $\varepsilon \to 0$, we have 
\begin{align}
    \int_0^1 \nabla_w f_{w_{s}(t)}(x)dt \approx \nabla_w f_{w_s(0)}(x)
\end{align}
In particular, given a model $f$ that admits a Lipshitz constant $K$ this approximation has error bounded by $\varepsilon K$ and a proof of this convergence is direct. 
This demonstrates that the asymmetry of this function is exactly measuring the disagreement between the discrete steps taken during training with the gradient field. 
This function is one of several subjects for further study, particularly in the context of Gaussian processes whereby the asymmetric Gram matrix corresponding with this function can stand in for a covariance matrix. It may be that the not-symmetric analogue of the covariance in this case has physical meaning relative to uncertainty.

\subsection{Independence from Optimization Scheme}
We can see that by changing equation ~\ref{dstep} we can produce an exact representation for any first order discrete optimization scheme that can be written in terms of model gradients aggregated across subsets of training data. This could include backward Euler, leapfrog, and any variation of adaptive step sizes. This includes stochastic gradient descent, and other forms of subsampling (for which the training sums need only be taken over each sample). One caveat is adversarial training, whereby the $a_i$ are now sampling a measure over the continuum of adversarial images. We can write this exactly, however computation will require approximation across the measure. Modification of this kernel for higher order optimization schemes remains an open problem.


\begin{figure*}[!ht]
    \centering
        \includegraphics[width=0.3\textwidth]{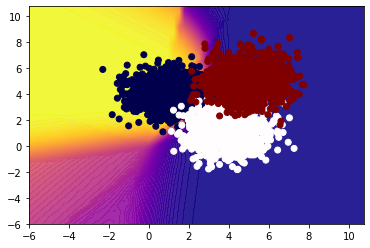}\includegraphics[width=0.3\textwidth]{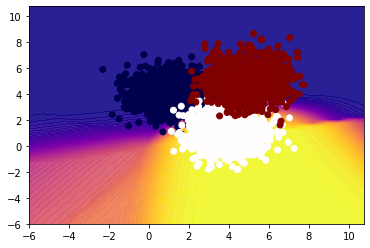}\includegraphics[width=0.3\textwidth]{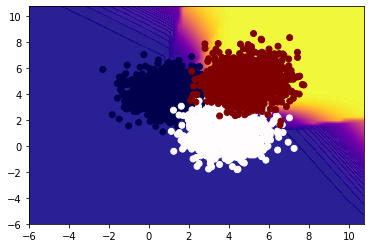}
    \caption{Updated predictions with kernel $a_i$ updated via gradient descent with training data overlaid for classes 1 (left), 2 (middle), and 3 (right). The high prediction confidence in regions far from training points demonstrates that the learned kernel is non-stationary.}
    \label{fig:svm}
\end{figure*}
\subsection{Ensemble Reduction}
In order to reduce the ensemble representation of Equation ~\eqref{ensemble} to the kernel representation of Equation ~\eqref{exact}, we require that the sum over steps still retain the properties of the kernel (symmetry and positive semi-definiteness). In particular we require that for every subset of the training data ${x_i}$ and arbitrary ${\alpha_i}$ and ${\alpha_j}$, we have
\begin{align}
    \sum_{i=1}^n\sum_{j=1}^n \sum_{l=1}^M \sum_{s=1}^N \alpha_i \alpha_j a_{l, s}\int_{0}^1 K_{\text{EPK}}(x_i,x_j) dt \geq 0
\end{align}
A sufficient condition for this reduction is that the gradient of the loss function does not change throughout training. This is the case for categorical cross-entropy where labels are in $\{0,1\}$. In fact, in this specific context the gradient of the loss function does not depend on $f(x)$, and are fully determined by the ground truth label, making the gradient of the cross-entropy loss a constant value throughout training (See Appendix section ~\ref{proof:ekmr}). Showing the positive-definiteness of more general loss functions (e.g. mean squared error loss) will likely require additional regularity conditions on the training path, and is left as future work.

\subsection{Prior Work}
\label{subsec:disc}
Constant sign loss functions have been previously studied by Chen et al. ~\cite{chen2021equivalence}, however the kernel that they derive for a finite-width case is of the form
\begin{align}
    K(x,x_i) =  \int_0^T |\nabla_f L(f_t(x_i), y_i)| \langle \nabla_w f_t(x), \nabla_w f_t(x_i) \rangle dt
\end{align}
The summation across these terms satisfies the positive semi-definite requirement of a kernel, however the weight $|\nabla L(f_t(x_i), y_i)|$ depends on $x_i$ which is one of the two inputs. This makes the resulting function $K(x,x_i)$ asymmetric and therefore not a kernel.

\subsection{Uniqueness}
Uniqueness of this kernel is not guaranteed. 
The mapping from paths in gradient space to kernels is in fact a function, meaning that each finite continuous path has a unique exact kernel representation of the form described above. 
However, this function is not necessarily onto the set of all possible kernels. 
This is evident from the existence of kernels for which representation by a finite parametric function is impossible.
Nor is this function necessarily one-to-one since there is a continuous manifold of equivalent parameter configurations for neural networks.
For a given training path, we can pick another path of equivalent configurations whose gradients will be separated by some constant $\delta > 0$.
The resulting kernel evaluation along this alternate path will be exactly equivalent to the first, despite being a unique path. 
We also note that the linear path $l_2$ interpolation is not the only valid path between two discrete points in weight space.
Following the changes in model weights along a path defined by Manhattan Distance is equally valid and will produce a kernel machine with equivalent outputs.
It remains an open problem to compute paths from two different starting points which both satisfy the constant bias condition from Definition~\eqref{epk} which both converge to the same final parameter configuration and define different kernels.

\section{Experimental Results}
    Our first experiments test the kernel formulation on a dataset 
    which can be visualized in 2d. These experiments serve as a sanity check
    and provide an interpretable representation of what the kernel is learning.
    \begin{figure}[!ht]
        \centering

        \includegraphics[width=0.40\textwidth]{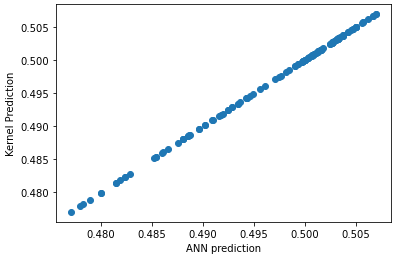}
        \caption{Class 1 EPK Kernel Prediction (Y) versus neural network prediction (X) for 100 test points, demonstrating extremely close agreement.}
        \label{fig:toymatch}
    \end{figure}

\subsection{Evaluating The Kernel} \label{subsec:evaluate}
A small test data set within 100 dimensions is created by generating 1000 random samples with means $(1,4,0,...)$, $(4,1,0,...)$ and $(5,5,0,...)$ and standard deviation $1.0$. These points are labeled according to the mean of the Gaussian used to generate them, providing 1000 points each from 3 classes. A fully connected ReLU network with 1 hidden layer is trained using categorical cross-entropy (CCE) and gradient descent with gradients aggregated across the entire training set for each step. We then compute the EPK for this network, approximating the integral from Equation~\ref{eq2} with 100 steps which replicates the output from the ReLU network within machine precision. The EPK (Kernel) outputs are compared with neural network predictions in Fig.~\ref{fig:toymatch} for class 1. Having established this kernel, and its corresponding kernel machine, one natural extension is to allow the kernel weights $a_i$ to be retrained. We perform this updating of the krenel weights using a SVM and present its predictions for each of three classes in Fig.~\ref{fig:svm}.

\subsection{Kernel Analysis}
Having established the efficacy of this kernel for model representation, the next step is to analyze this kernel to understand how it may inform us about the properties of the corresponding model. In practice, it becomes immediately apparent that this kernel lacks typical properties preferred when humans select kernels. Fig.~\ref{fig:svm} show that the weights of this kernel are non-stationary on our toy problem, with very stable model predictions far away from training data. Next, we use this kernel to estimate uncertainty. Consistent with many other research works on Gaussian processes for classification ~\cite{rasmussen2006gaussian} we use a GP to regress to logits. We then use Monte-Carlo to estimate posteriors with respect to probabilities (post-soft-max) for each prediction across a grid spanning the training points of our toy problem. The result is shown on the right-hand column of Fig.~\ref{fig:cov}. We can see that the kernel values are more confident (lower standard deviation) and more stable (higher kernel values) the farther they get from the training data in most directions. 

    \begin{figure}[h]
        \centering
        \includegraphics[width=0.24\textwidth]{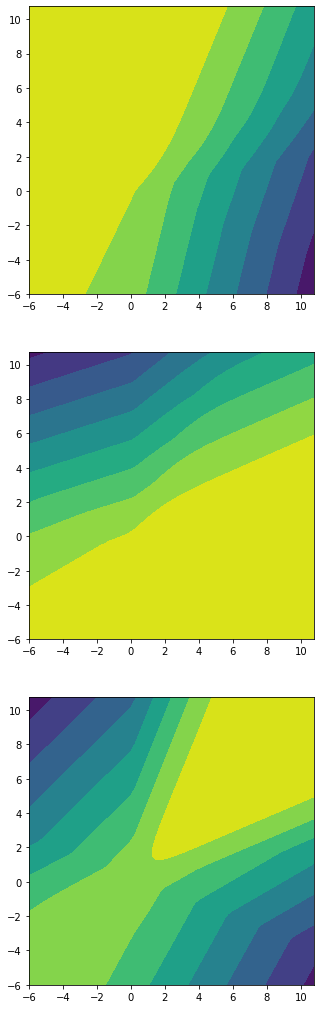}\includegraphics[width=0.24\textwidth]{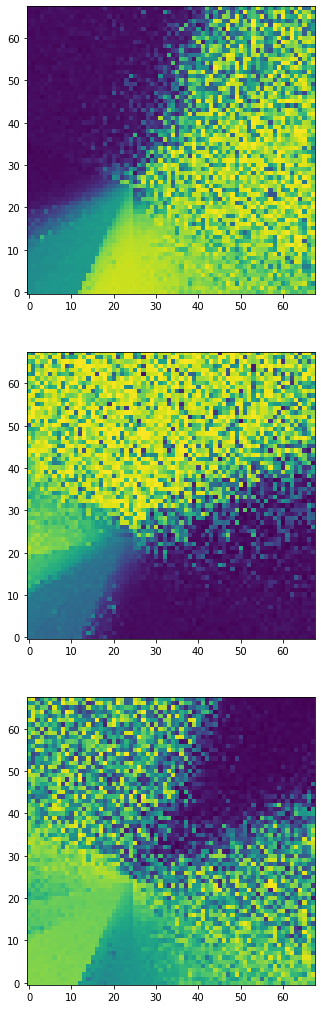}
        \caption{(left) Kernel values measured on a grid around the training set for our 2D problem. Bright yellow means high kernel value (right) Monte-Carlo estimated standard deviation based on gram matrices generated using our kernel for the same grid as the kernel values. Yellow means high standard deviation, blue means low standard deviation.}
        \label{fig:cov}
    \end{figure}

In order to further understand how these strange kernel properties come about, we exercise another advantage of a kernel by analyzing the points that are contributing to the kernel value for a variety of test points. 
In Fig.~\ref{fig:points} we examine the kernel values for each of the training points during evaluation of three points chosen as the mean of the generating distribution for each class. 
The most striking property of these kernel point values is the fact that they are not proportional to the euclidean distance from the test point.
This appears to indicate a set of basis vectors relative to each test point learned by the model based on the training data which are used to spatially transform the data in preparation for classification. This may relate to the correspondence between neural networks and maximum margin classifiers discussed in related work (~\cite{chizat2020maxmargin} ~\cite{shah2021input}). 
Another more subtle property is that some individual data points, mostly close to decision boundaries are slightly over-weighted compared to the other points in their class. 
This latter property points to the fact that during the latter period of training, once the network has already achieved high accuracy, only the few points which continue to receive incorrect predictions, i.e. caught on the wrong side of a decision boundary, will continue contributing to the training gradient and therefore to the kernel value.

    \begin{figure*}[ht]
        \centering
        \includegraphics[width=0.32\textwidth]{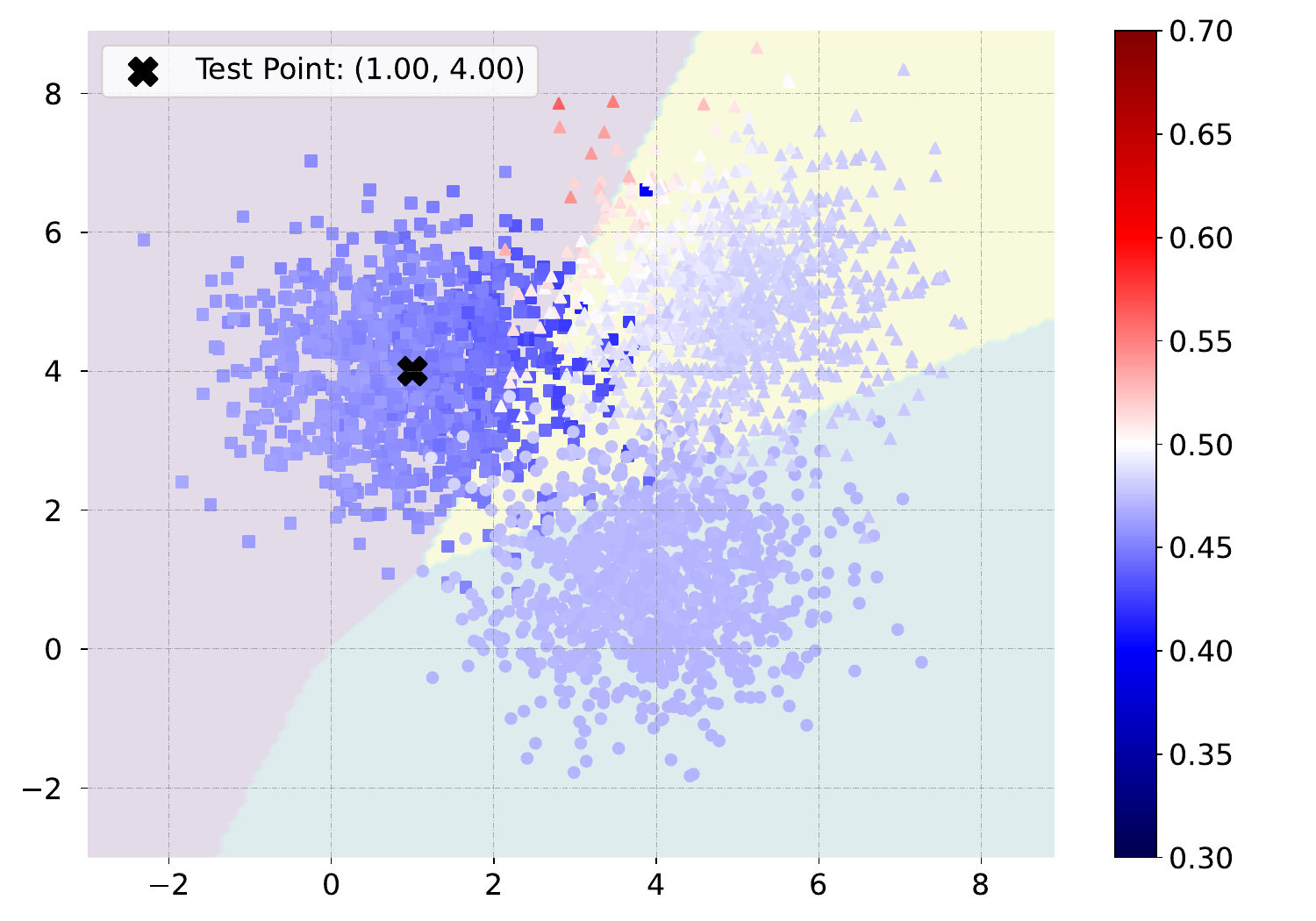}
        \includegraphics[width=0.32\textwidth]{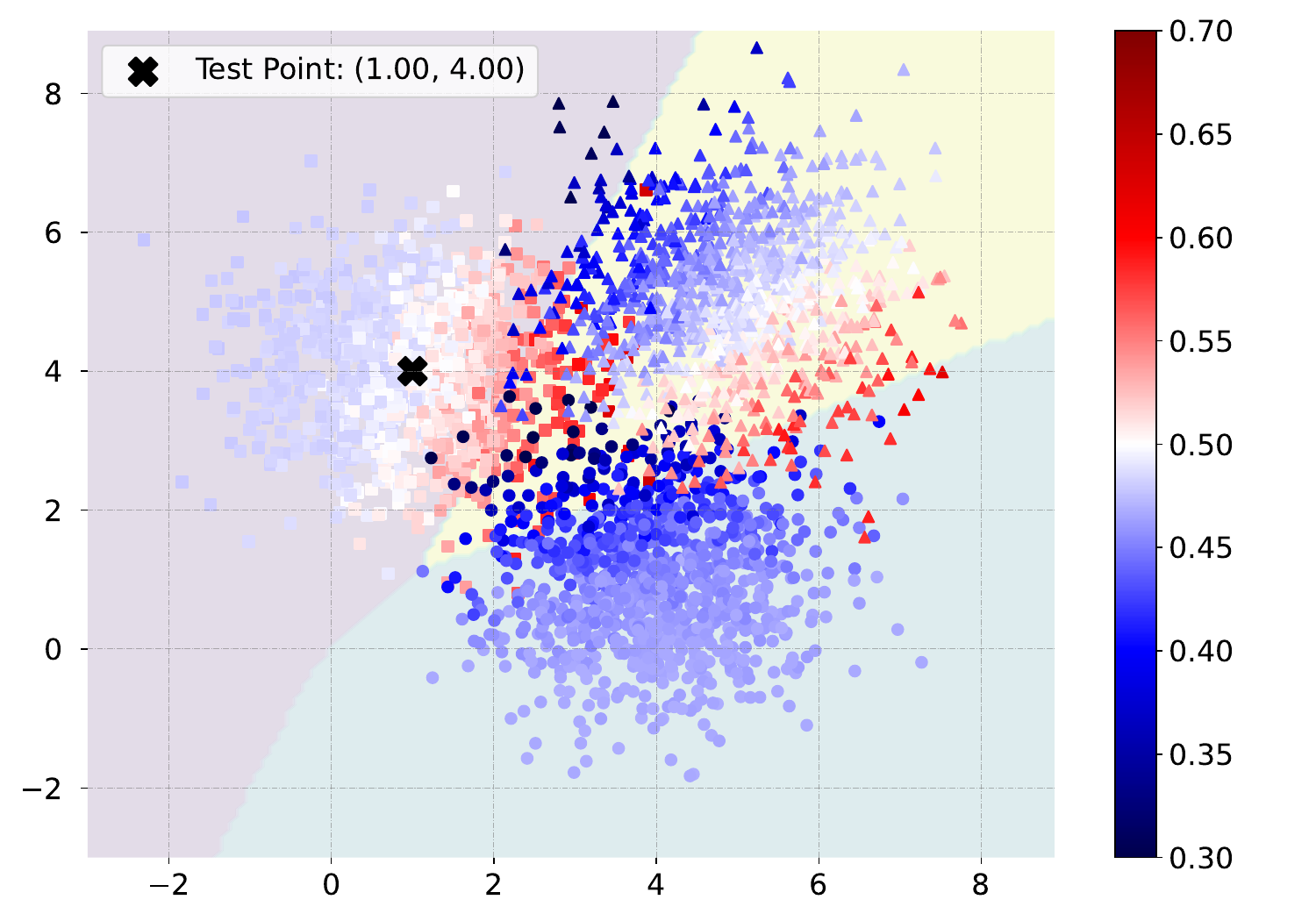}
        \includegraphics[width=0.32\textwidth]{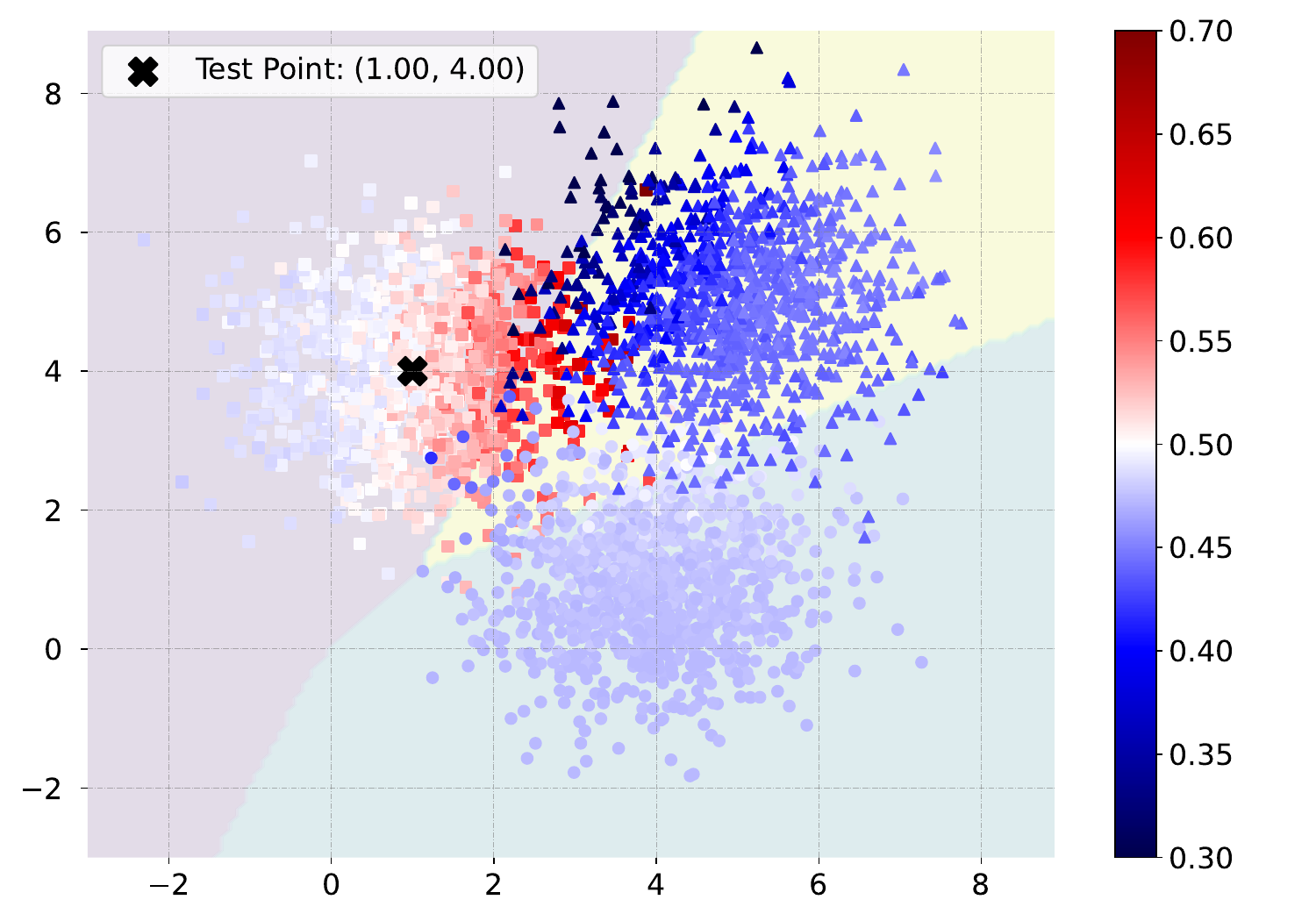}\\
        
        \includegraphics[width=0.32\textwidth]{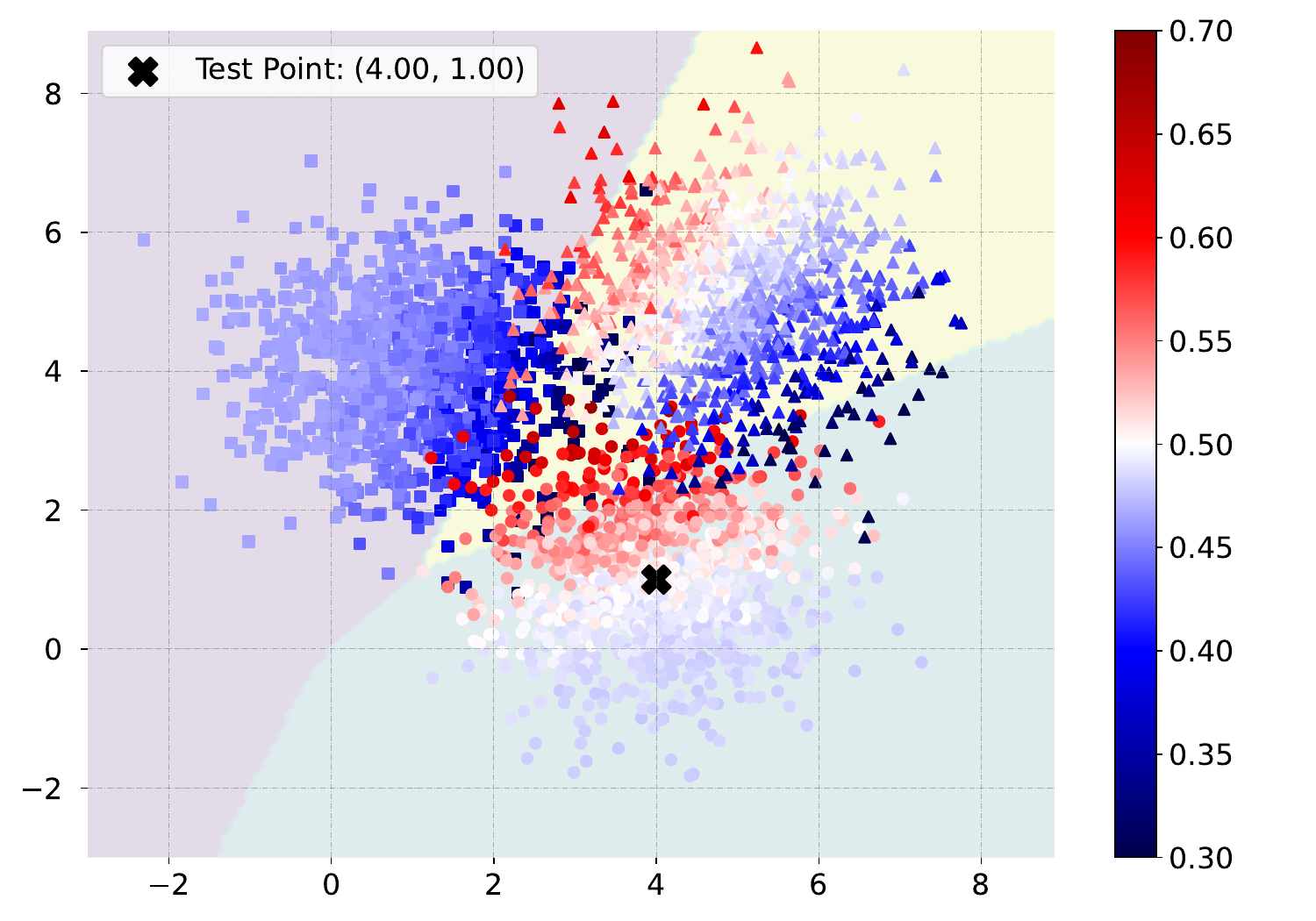}
        \includegraphics[width=0.32\textwidth]{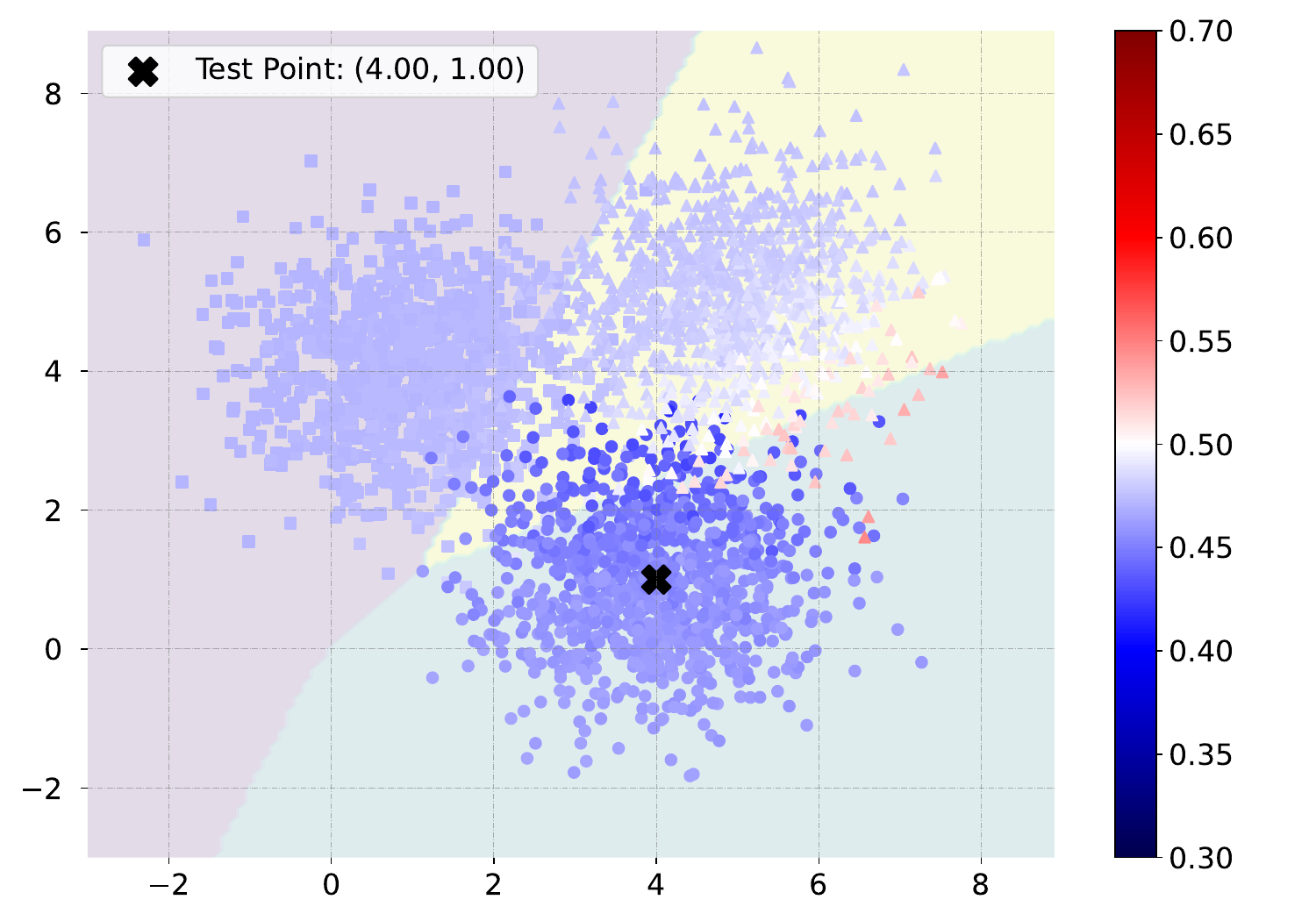}
        \includegraphics[width=0.32\textwidth]{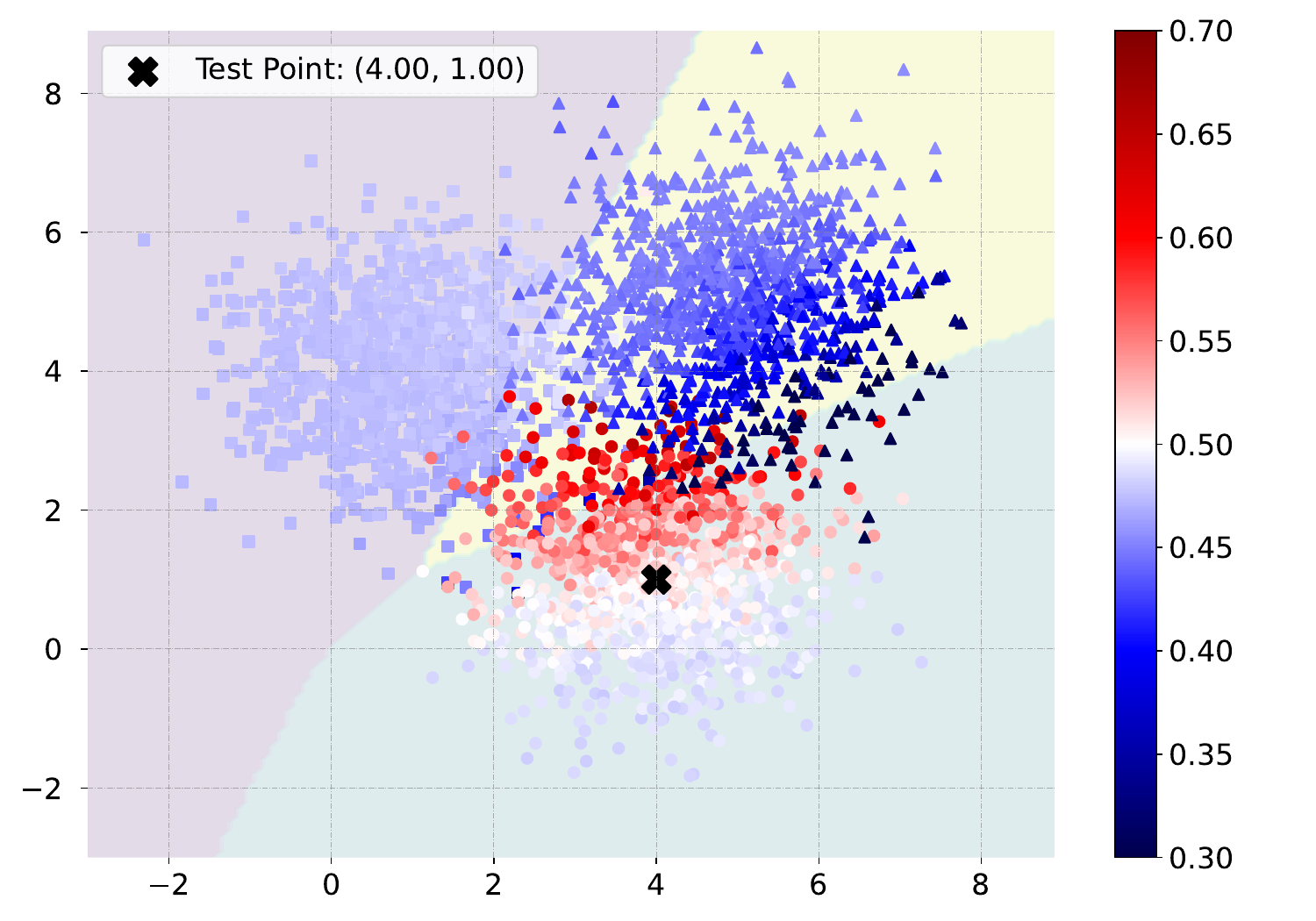}\\

        \includegraphics[width=0.32\textwidth]{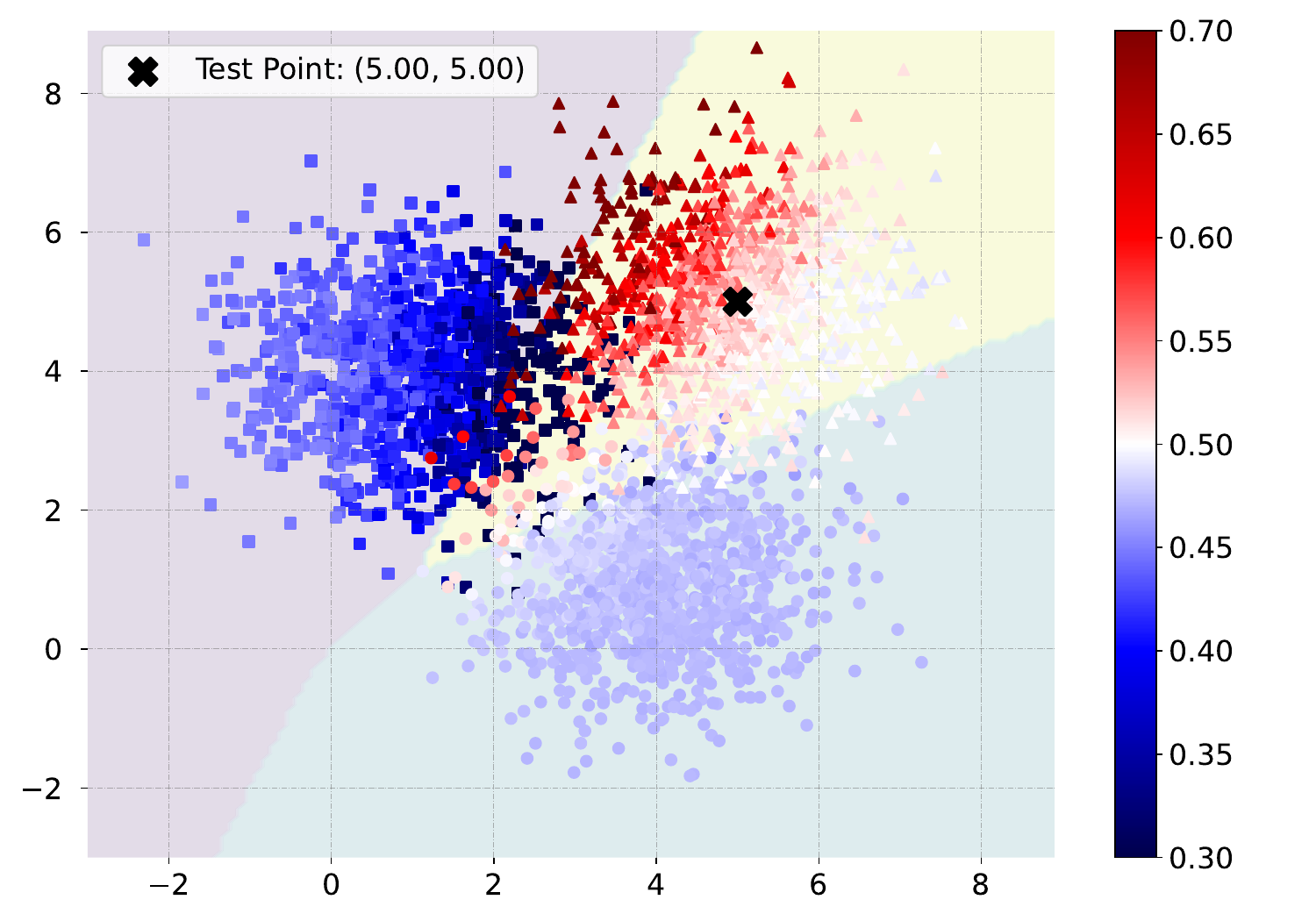}
        \includegraphics[width=0.32\textwidth]{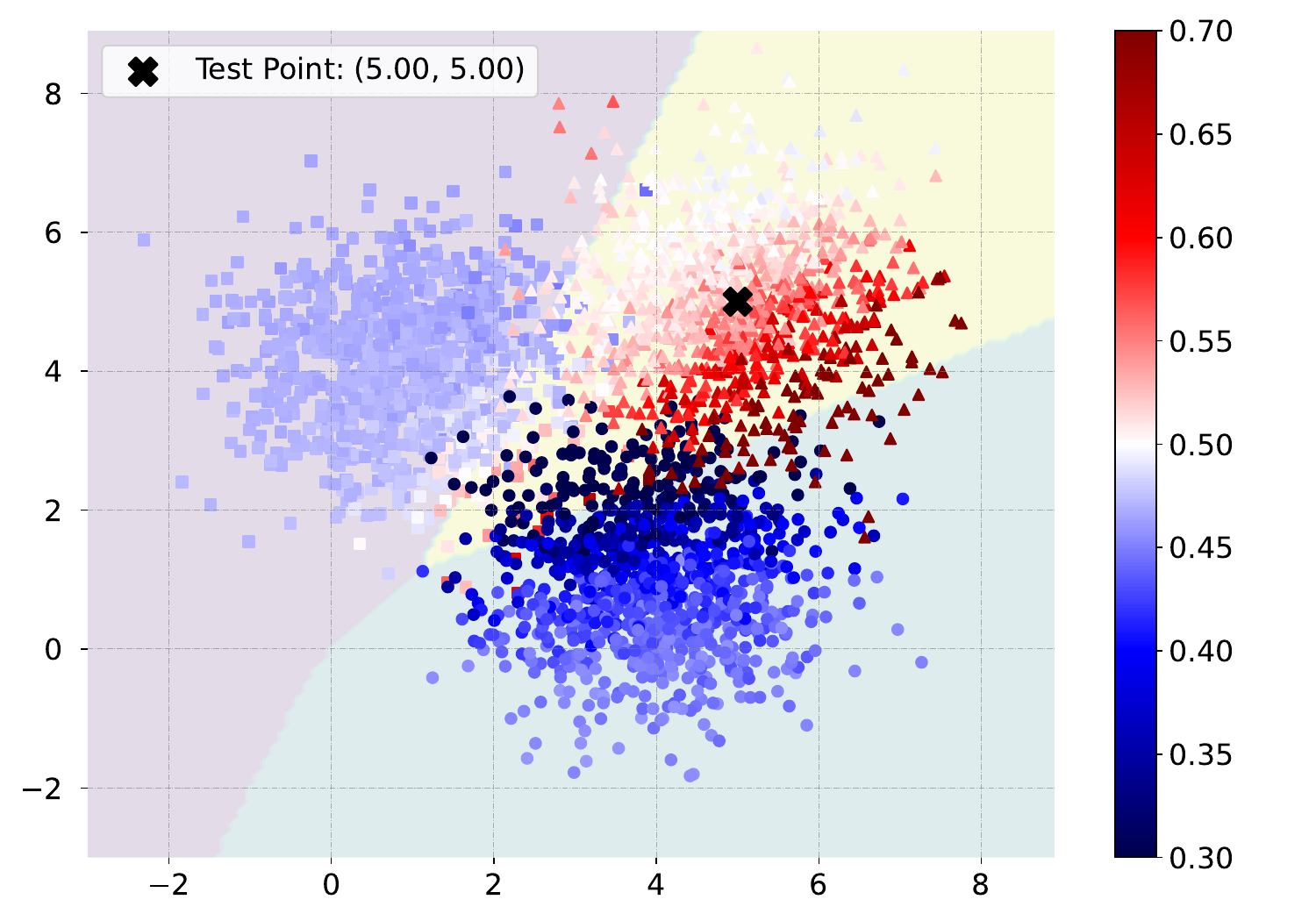}
        \includegraphics[width=0.32\textwidth]{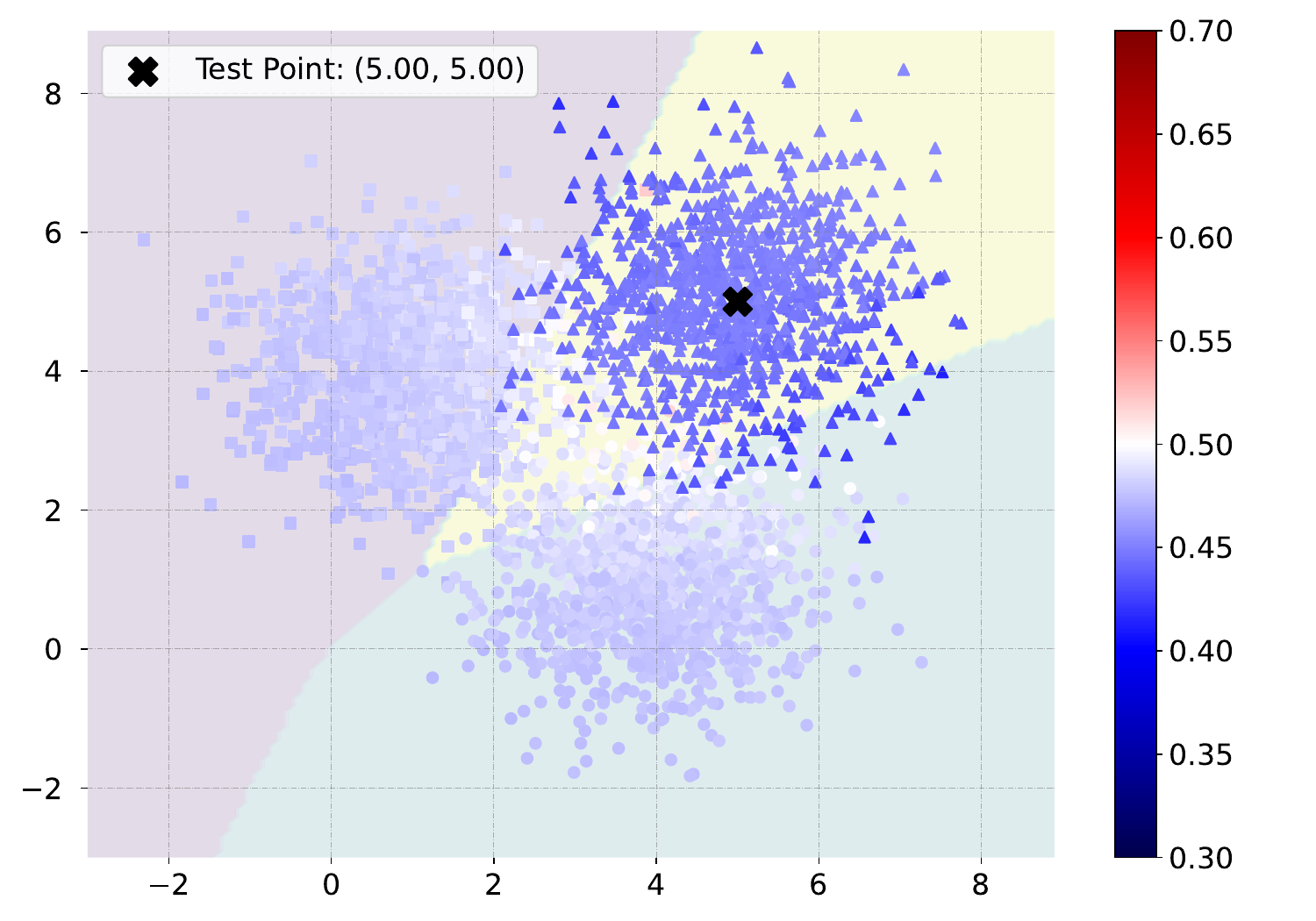}
        \caption{Plots showing kernel values for each training point relative to a test point. Because our kernel is replicating the output of a network, there are three kernel values per sample on a three class problem. This plot shows kernel values for all three classes across three different test points selected as the mean of the generating distribution. Figures on the diagonal show kernel values of the predicted class. Background shading is the neural network decision boundary.}
        \label{fig:points}
    \end{figure*}



\subsection{Extending To Image Data}
    We perform experiments on MNIST to demonstrate the applicability to image data. 
    This kernel representation was generated for convolutional ReLU Network with the categorical cross-entropy loss function, using Pytorch ~\cite{pytorch}. 
    The model was trained using forward Euler (gradient descent) using gradients generated as a sum over all training data for each step. 
    The state of the model was saved for every training step. In order to compute the per-training-point gradients needed for the kernel representation, the per-input jacobians are computed at execution time in the representation by loading the model for each training step $i$, computing the jacobians for each training input to compute $\nabla_w f_{w_s(0)}(x_i)$, and then repeating this procedure for 200 $t$ values between 0 and 1 in order to approximate $\int_0^1 f_{w_s(t)}(x)$. For MNIST, the resulting prediction is very sensitive to the accuracy of this integral approximation, as shown in Fig.~\ref{fig:mnist}. The top plot shows approximation of the above integral with only one step, which corresponds to the DPK from previous work (~\cite{chen2021equivalence}, ~\cite{domingos2020}, ~\cite{incudini2022quantum}) and as we can see, careful approximation of this integral is necessary to achieve an accurate match between the model and kernel. 

\begin{figure}[!h]
    \centering
    \includegraphics[width=0.95\linewidth]{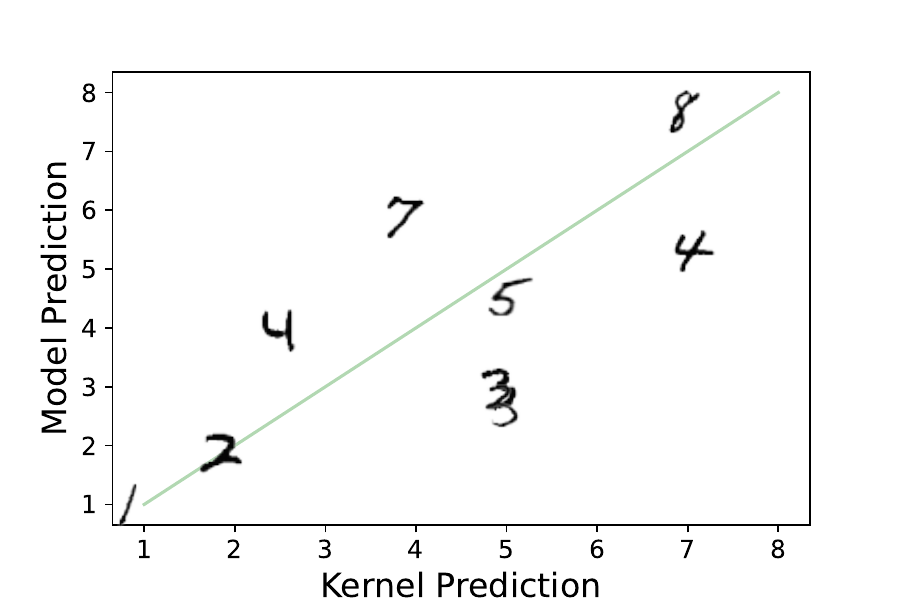}

    \vspace{-9mm}
    
    \includegraphics[width=0.95\linewidth]{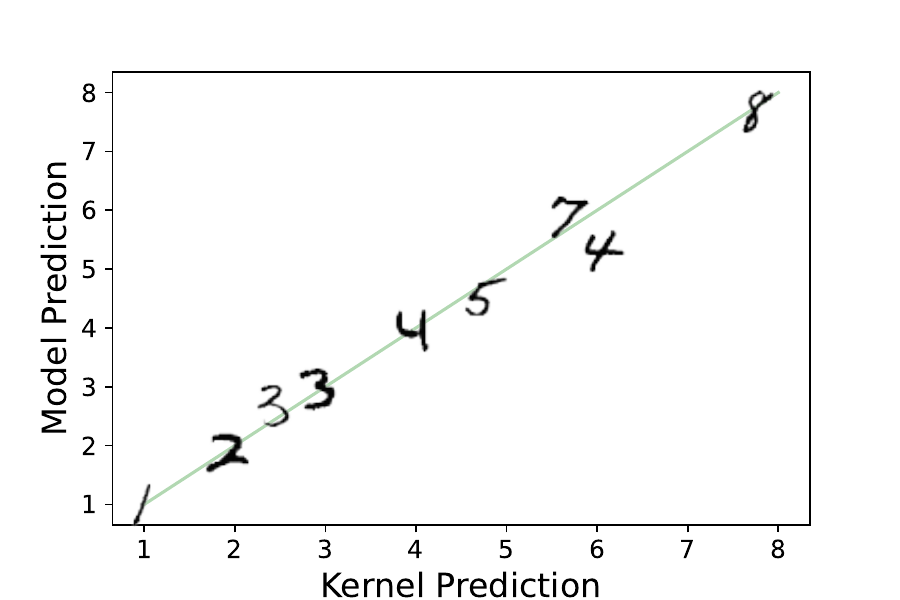}
    
    \vspace{-9mm}
    
    \includegraphics[width=0.95\linewidth]{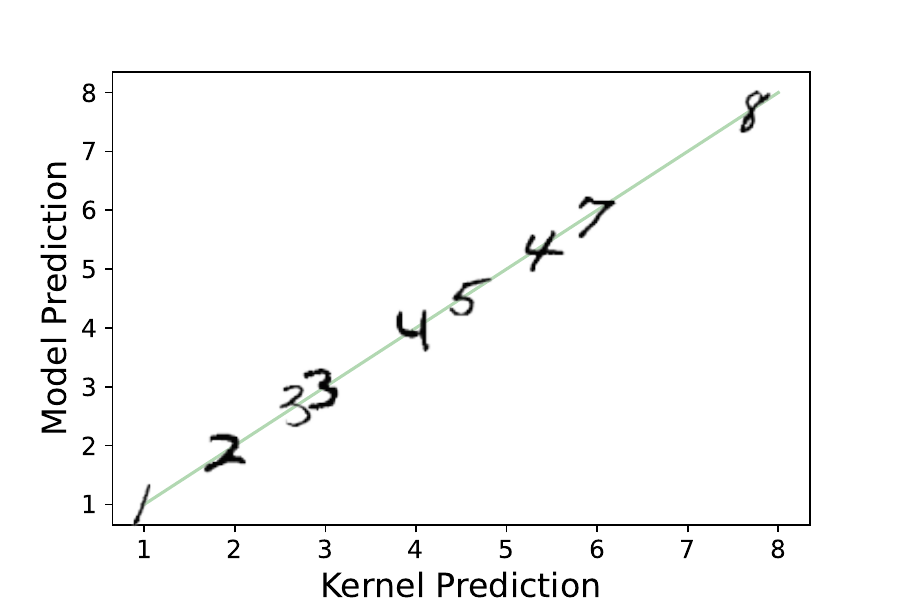}
    \caption{Experiment demonstrating the relationship between model predictions and kernel predictions for varying precision of the integrated path kernel. The top figure shows the integral estimated using only a single step. This is equivalent to the discrete path kernel (DPK) of previous work ~\cite{domingos2020every, chen2021equivalence}. The middle figure shows the kernel evaluated using 10 integral steps. The final figure shows the path kernel evaluated using 200 integral steps.}
    \label{fig:mnist}
\end{figure}
\section{Conclusion and Outlook} 
The implications of a practical and finite kernel representation for the study of neural networks are profound and yet importantly limited by the networks that they are built from. For most gradient trained models, there is a disconnect between the input space (e.g. images) and the parameter space of a network. Parameters are intrinsically difficult to interpret and much work has been spent building approximate mappings that convert model understanding back into the input space in order to interpret features, sample importance, and other details ~\cite{simonyan2013deep, lundberg2017unified, Selvaraju_2019}. The EPK is composed of a direct mapping from the input space into parameter space. This mapping allows for a much deeper understanding of gradient trained models because the internal state of the method has an exact representation mapped from the input space. As we have shown in Fig.~\ref{fig:points}, kernel values derived from gradient methods tell an odd story. We have observed a kernel that picks inputs near decision boundaries to emphasize and derives a spatial transform whose basis vectors depend neither uniformly nor continuously on training points. Although kernel values are linked to sample importance, we have shown that most contributions to the kernel's prediction for a given point are measuring an overall change in the network's internal representation. This supports the notion that most of what a network is doing is fitting a spatial transform based on a wide aggregation of data, and only doing a trivial calculation to the data once this spatial transform has been determined ~\cite{chizat2020maxmargin}. 
As stated in previous work ~\cite{domingos2020}, this representation has strong implications about the structure of gradient trained models and how they can understand the problems that they solve. Since the kernel weights in this representation are fixed derivatives with respect to the loss function $L$, $a_{i, s} = -\varepsilon  \dfrac{\partial L(f_{w_s(0)}(x_i),  y_i)}{\partial f_i}$, nearly all of the information used by the network is represented by the kernel mapping function and inner product. Inner products are not just measures of distance, they also measure angle. In fact, figure \ref{fig:grad} shows that for a typical training example, the $L_2$ norm of the weights changes monotonically by only 20-30\% during training. This means that the "learning" of a gradient trained model is dominated by change in angle, which is predicted for kernel methods in high dimensions ~\cite{hardle2004nonparametric}.

\begin{figure}[h]
\centering
\includegraphics[width=.45\textwidth]{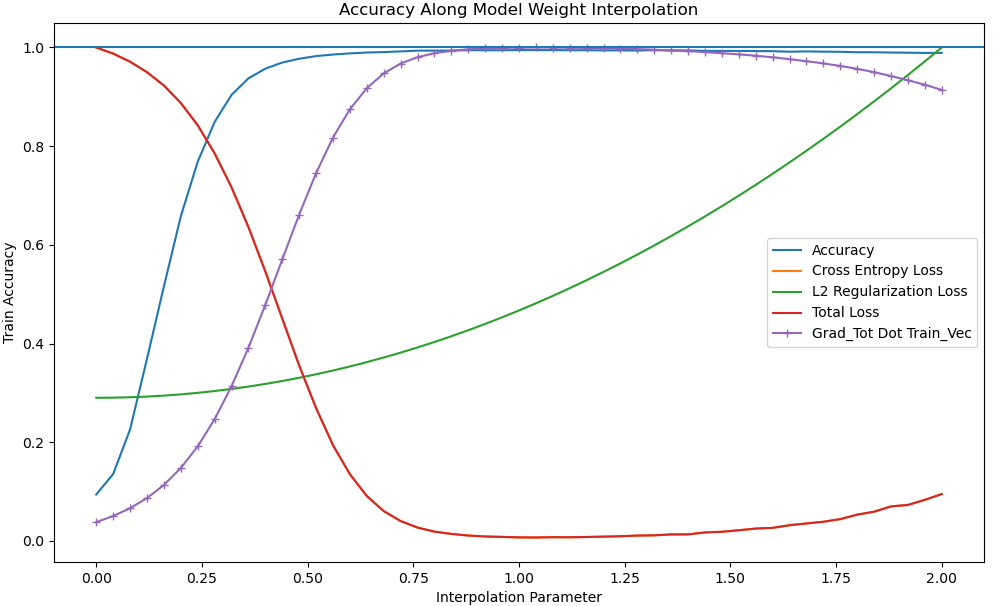}
\caption{This plot shows a linear interpolation $w(t) = w_0 + t(w_{1} - w_0)$ of model parameters $w$ for a convolutional neural network $f_w$ from their starting random state $w_0$ to their ending trained state $w_1$. The hatched purple line shows the dot product of the sum of the  gradient over the training data $X$, $\langle \nabla_w f_{w(t)}(X), (w_1 - w_0)/|w_1 - w_0| \rangle$. The other lines indicate accuracy (blue), total loss (red decreasing), and L2 Regularization (green increasing)}  
\label{fig:grad}
\end{figure}


For kernel methods, our result also represents a new direction. Despite their firm mathematical foundations, kernel methods have lost ground since the early 2000s because the features implicitly learned by deep neural networks yield better accuracy than any known hand-crafted kernels for complex high-dimensional problems ~\cite{NIPS2005_663772ea}. 
We're hopeful about the scalability of learned kernels based on recent results in scaling kernel methods  ~\cite{snelson2005sparse}. 
Exact kernel equivalence could allow the use of neural networks to implicitly construct a kernel. 
This could allow kernel based classifiers to approach the performance of neural networks on complex data. 
Kernels built in this way may be used with Gaussian processes to allow meaningful direct uncertainty measurement. 
This would allow for much more significant analysis for out-of-distribution samples including adversarial attacks ~\cite{szegedy2013intriguing, ilyas2019adversarial}. 
There is significant work to be done in improving the properties of the kernels learned by neural networks for these tools to be used in practice.
We are confident that this direct connection between practical neural networks and kernels is a strong first step towards achieving this goal.

\bibliography{bibfile}
\bibliographystyle{icml2023}

\appendix

\onecolumn
\section{Appendix}

\subsection{The EPK is a Kernel}

\ker*
\begin{proof}
We must show that the associated kernel matrix $K_{\text{EPK}} \in \mathbb{R}^{n\times n}$ defined for an arbitrary subset of data $\{x_i\}_{i=1}^M \subset X$ as $K_{\text{EPK},i,j} = \int_0^1\langle \phi_{s,t}(x_i), \phi_{s,t}(x_j)\rangle dt$ is both symmetric and positive semi-definite.

Since the inner product on a Hilbert space $\langle \cdot, \cdot \rangle$ is symmetric and since the same mapping $\varphi$ is used on the left and right, $K_{\text{EPK}}$ is \textbf{symmetric}. 

To see that $K_{\text{EPK}}$ is \textbf{Positive Semi-Definite}, let $\alpha = (\alpha_1, \alpha_2, \dots, \alpha_n)^\top \in \mathbb{R}^n$ be any vector. We need to show that $\alpha^\top K_{\text{EPK}} \alpha \geq 0$. We have

\begin{align}
\alpha^\top K_{\text{EPK}} \alpha &= \sum_{i=1}^n \sum_{j=1}^n \alpha_i \alpha_j \int_0^1 \langle \phi_{s,t}(x_i), \phi_{s,t}(x_j)\rangle dt \\
&= \sum_{i=1}^n \sum_{j=1}^n \alpha_i \alpha_j \int_0^1 \langle \nabla_{w}\hat{y}_{w_s(t,x_i)}, \nabla_{w}\hat{y}_{w_s(t,x_j)}\rangle dt \\
&= \int_0^1 \sum_{i=1}^n \sum_{j=1}^n \alpha_i \alpha_j \langle \nabla_{w}\hat{y}_{w_s(t,x_i)}, \nabla_{w}\hat{y}_{w_s(t,x_j)}\rangle dt \\
&= \int_0^1 \sum_{i=1}^n \sum_{j=1}^n  \langle \alpha_i \nabla_{w}\hat{y}_{w_s(t,x_i)}, \alpha_j \nabla_{w}\hat{y}_{w_s(t,x_j)}\rangle dt \\
&= \int_0^1    \langle \sum_{i=1}^n \alpha_i \nabla_{w}\hat{y}_{w_s(t,x_i)}, \sum_{j=1}^n \alpha_j \nabla_{w}\hat{y}_{w_s(t,x_j)}\rangle dt \\
& \text{Re-ordering the sums so that their indices match, we have}\\
&= \int_0^1 \left\lVert \sum_{i=1}^n \alpha_i \nabla_{w}\hat{y}_{w_s(t,x_i)}\right\rVert^2 dt \\
&\geq 0,
\end{align}

Note that this reordering does not depend on the continuity of our mapping function $\phi_{s,t}(x_i)$.

\end{proof}

\textbf{Remark \remlabel{rem2}} In the case that our mapping function $\varphi$ is not symmetric, after re-ordering, we still yield something of the form:
\begin{align}
&= \int_0^1 \left\lVert \sum_{i=1}^n \alpha_i \nabla_{w}\hat{y}_{w_s(t,x_i)}\right\rVert^2 dt \\
\end{align}
The natural asymmetric $\phi$ is symmetric for every non-training point, so we can partition this sum. For the non-training points, we have symmetry, so for those points we yield exactly the $L^2$ metric. For the remaining points, if we can pick a Lipschitz constant $E$ along the entire gradient field, then if training steps are enough, then the integral and the discrete step side of the asymmetric kernel will necessarily have positive inner product. In practice, this Lipschitz constant will change during training and for appropriately chosen step size (smaller early in training, larger later in training) we can guarantee positive-definiteness. In particular this only needs to be checked for training points. 

\subsection{The EPK gives an Exact Representation}
\label{proof:eker}
\eker*
\begin{proof}

Let $f_{w}$ be a differentiable function parameterized by parameters $w$ which is trained via $N$ forward Euler steps of fixed step size $\varepsilon$ on a training dataset $X$ with labels $ Y$, with initial parameters $w_0$ so that there is a constant $b$ such that for every $x$, $f_{w_0}(x) = b$, and weights at each step ${w_s : 0 \leq s \leq N}$. Let $x \in X$ be arbitrary and within the domain of $f_w$ for every $w$. For the final trained state of this model $f_{w_N}$, let $y = f_{w_N}(x)$. 

For one step of training, we consider $y_s  = f_{w_s(0)}(x)$ and $y_{s+1} = f_{w_{s+1}}(x)$. We wish to account for the change $y_{s+1} - y_s$ in terms of a gradient flow, so we must compute $\dfrac{\partial y}{dt}$ for a continuously varying parameter $t$. Since $f$ is trained using forward Euler with a step size of $\varepsilon > 0$, this derivative is determined by a step of fixed size of the weights $w_s$ to $w_{s+1}$. We parameterize this step in terms of the weights:

\begin{align}
    \dfrac{d w_s(t)}{dt} &= (w_{s+1} - w_s)\\   
    \int \dfrac{d w_s(t)}{dt} dt &= \int (w_{s+1} - w_s)dt\\
    w_s(t) &= w_s + t(w_{s+1} - w_s)\\
\end{align}
Since $f$ is being trained using forward Euler, across the entire training set $X$ we can write:
\begin{align}
    \dfrac{d w_s(t)}{dt} &= -\varepsilon \nabla_w L(f_{w_s(0)}(X), y_i) = -\varepsilon \sum_{j = 1}^{d} \sum_{i=1}^M  \dfrac{\partial L(f_{w_s(0)}(x_i),  y_i)}{\partial w_j} \label{eq10}
\end{align}
Applying chain rule and the above substitution, we can write
\begin{align}
    \dfrac{d \hat y}{dt} = \dfrac{d f_{w_s(t)}}{dt} &= \sum_{j = 1}^{d} \dfrac{d f}{\partial w_j} \dfrac{\partial w_j}{dt}\\
&= \sum_{j = 1}^{d} \dfrac{d f_{w_s(t)}(x)}{\partial w_j} \left(-\varepsilon \dfrac{\partial L(f_{w_s(0)}(X_T),  Y_T)}{\partial w_j}\right)\\
&= \sum_{j = 1}^{d} \dfrac{d f_{w_s(t)}(x)}{\partial w_j} \left(-\varepsilon \sum_{i = 1}^{M}\dfrac{d L(f_{w_s(0)}(x_i),  y_i)}{d f_{w_s(0)}(x_i)}\dfrac{\partial  f_{w_s(0)}(x_i)}{\partial w_j}\right)\\
&= -\varepsilon \sum_{i = 1}^{M} \dfrac{d L(f_{w_s(0)}(x_i),  y_i)}{d f_{w_s(0)}(x_i)} \sum_{j = 1}^{d} \dfrac{d f_{w_s(t)}(x)}{\partial w_j}  \dfrac{d f_{w_s(0)}(x_i)}{\partial w_j}\\
&= -\varepsilon \sum_{i = 1}^{M} \dfrac{d L(f_{w_s(0)}(x_i),  y_i)}{d f_{w_s(0)}(x_i)} \nabla_w f_{w_s(t)}(x) \cdot \nabla_w f_{w_s(0)}(x_i)\\
\end{align}
Using the fundamental theorem of calculus, we can compute the change in the model's output over step $s$
\begin{align}
    y_{s+1} - y_s &= \int_0^1 -\varepsilon \sum_{i = 1}^{M} \dfrac{d L(f_{w_s(0)}(x_i),  y_i)}{d f_{w_s(0)}(x_i)}  \nabla_w f_{w_s(t)}(x) \cdot \nabla_w f_{w_s(0)}(x_i)dt\\
 &=  -\varepsilon \sum_{i = 1}^{M} \dfrac{d L(f_{w_s(0)}(x_i),  y_i)}{d f_{w_s(0)}(x_i)}  \left(\int_0^1\nabla_w f_{w_s(t)}(x)dt\right) \cdot \nabla_w f_{w_s(0)}(x_i)\\
\end{align}
For all $N$ training steps, we have
\begin{align*}
y_N &= b + \sum_{s=1}^N y_{s+1} - y_s\\
y_N &= b + \sum_{s = 1}^N -\varepsilon \sum_{i = 1}^{M} \dfrac{d L(f_{w_s(0)}(x_i),  y_i)}{d f_{w_s(0)}(x_i)}  \left(\int_0^1\nabla_w f_{w_s(t)}(x)dt\right) \cdot \nabla_w f_{w_s(0)}(x_i)\\
&= b + \sum_{i = 1}^{M}\sum_{s = 1}^N -\varepsilon  \dfrac{d L(f_{w_s(0)}(x_i),  y_i)}{d f_{w_s(0)}(x_i)}  \int_0^1\left\langle \nabla_w f_{w_s(t,x)}(x), \nabla_w f_{w_s(t,x_i)}(x_i) \right\rangle dt\\ 
&= b + \sum_{i = 1}^{M}\sum_{s = 1}^N a_{i, s}  \int_0^1 \left\langle \phi_{s,t}(x), \phi_{s,t}(x_i)\right\rangle dt
\end{align*}
Since an integral of a symmetric positive semi-definite function is still symmetric and positive-definite, each step is thus represented by a kernel machine. 

\end{proof}
\subsection{When is an Ensemble of Kernel Machines itself a Kernel Machine?}
\label{proof:ekmr}
Here we investigate when our derived ensemble of kernel machines composes to a single kernel machine.
In order to show that a linear combination of kernels also equates to a kernel it is sufficient to show that $a_{i,s} = a_{i,0}$ for all $a_{i,s}$.
The $a_{i}$ terms in our kernel machine are determined by the gradient the training loss function.
This statement then implies that the gradient of the loss term must be constant throughout training in order to form a kernel.
Here we show that this is the case when we consider a log softmax activation on the final layer and a negative log likelihood loss function.

\begin{proof}
Assume a two class problem. In the case of a function with multiple outputs, we consider each output to be a kernel. We define our network output $\hat y_i$ as all layers up to and including the log softmax and $y_i$ is a one-hot encoded vector. 

\begin{align}
    L(\hat y_i,  y_i)
    &=  \sum_{k=1}^K -y_i^k(\hat y_i^k) \\
\end{align}
For a given output indexed by $k$, if $y_i^k = 1$ then we have
\begin{align}
    L(\hat y_i,  y_i)
    &=  -1(\hat y_i^k) \\
    \dfrac{\partial L(\hat y_i,  y_i)}{\partial \hat y_i} &= -1\\
\end{align}
If $y_i^k = 0$ then we have
\begin{align}
    L(\hat y_i,  y_i)
    &=  0(\hat y_i^k) \\
    \dfrac{\partial L(\hat y_i,  y_i)}{\partial \hat y_i} &= 0\\
\end{align}


    

In this case, since the loss is scaled directly by the output, and the only other term is an indicator function deciding which class label to take, we get a constant gradient.
This shows that the gradient of the loss function does not depend on $\hat y_i$. 
Therefore:
\begin{align}
    y &= b - \varepsilon \sum_{i = 1}^{N}\sum_{s = 1}^S a_{i, s}  \int_0^1 \left\langle \phi_{s,t}(x), \phi_{s,t}(x_i)\right\rangle dt\\
     &= b - \varepsilon \sum_{i = 1}^{N} a_{i, 0} \sum_{s = 1}^S  \int_0^1 \left\langle \phi_{s,t}(x), \phi_{s,t}(x_i)\right\rangle dt
\end{align}
This formulates a kernel machine where
\begin{align}
a_{i, 0} &= \dfrac{\partial L(f_{w_0}(x_i),  y_i)}{\partial f_i} \\
K(x, x_i) &= \sum_{s = 1}^S \int_0^1 \left\langle \phi_{s,t}(x), \phi_{s,t}(x_i)\right\rangle dt \\
\phi_{s,t}(x) &=  \nabla_w f_{w_s(t,x)} (x)\\
w_s(t,x) &= \begin{cases} w_s, x \in X_T\\ w_s(t), x \notin X_T \end{cases}\\
b &= 0
\end{align}
\end{proof}

It is important to note that this does not hold up if we consider the log softmax function to be part of the loss instead of the network.
In addition, there are loss structures which can not be rearranged to allow this property.
In the simple case of linear regression, we can not disentangle the loss gradients from the kernel formulation, preventing the construction of a valid kernel. 
For example assume our loss is instead squared error. Our labels are continuous on $\mathds{R}$ and our activation is the identity function.
\begin{align}
    L(f_i,  y_i) 
    &= (y_i - f_{i, s})^2 \\
    \dfrac{\partial L(f_i,  y_i)}{\partial f_i} &= 2(y_i- f_{i, s})
\end{align}

This quantity is dependent on $f_i$ and its value is changing throughout training. 

In order for 
\begin{align}
    \sum_{s=1}^S a_{i,s} \int_0^1 \langle \phi_{s,t}(x), \phi_{s,t}(x_i)\rangle dt
\end{align}
to be a kernel on its own, we need it to be a positive (or negative) definite operator and symetric. In the specific case of our practical path kernel, i.e. that in $K(x,x')$ if $x'$ happens to be equal to $x_i$, then positive semi-definiteness can be accounted for:
\begin{align}
    &= \sum_{s=1}^S 2(y_i- f_{i, s}) \int_0^1 \langle \phi_{s,t}(x), \phi_{s,t}(x_i)\rangle dt\\
    &= \sum_{s=1}^S 2(y_i- f_{i, s}) \int_0^1 \langle \nabla_w f_{w_s(t))} (x), \nabla_w f_{w_s(0)} (x_i)\rangle dt\\
    &= \sum_{s=1}^S 2 \left(y_i \cdot \int_0^1 \langle \nabla_w f_{w_s(t))} (x), \nabla_w f_{w_s(0)} (x_i)\rangle dt - f_{i, s} \int_0^1 \langle \nabla_w f_{w_s(t))} (x), \nabla_w f_{w_s(0)} (x_i)\rangle dt \right)\\
    &= \sum_{s=1}^S 2 \left(y_i \cdot \int_0^1 \langle \nabla_w f_{w_s(t))} (x), \nabla_w f_{w_s(0)} (x_i)\rangle dt -  \int_0^1 \langle \nabla_w f_{w_s(t))} (x), f_{i, s} \nabla_w f_{w_s(0)} (x_i)\rangle dt \right)\\
    &= \sum_{s=1}^S 2 \left(y_i \cdot \int_0^1 \langle \nabla_w f_{w_s(t))} (x), \nabla_w f_{w_s(0)} (x_i)\rangle dt -  \int_0^1 \langle \nabla_w f_{w_s(t))} (x), \dfrac{1}{2}\nabla_w (f_{w_s(0)} (x_i))^2\rangle dt \right)\\
\end{align}
Otherwise, we get the usual 
\begin{align}
        &= \sum_{s=1}^S 2(y_i- f_{i, s}) \int_0^1 \langle \nabla_w f_{w_s(t,x))} (x), \nabla_w f_{w_s(t,x)} (x')\rangle dt\\
\end{align}
The question is two fold. One, in general theory (i.e. the lower example), can we contrive two pairs $(x_1,x'_1)$ and $(x_2,x'_2)$ that don't necessarily need to be training or test images for which this sum is positive for $1$ and negative for $2$. Second, in the case that we are always comparing against training images, do we get something more predictable since there is greater dependence on $x_i$ and we get the above way of re-writing  using the gradient of the square of $f(x_i)$. 

However, even accounting for this by removing the sign of the loss will still produce a non-symmetric function. This limitation is more difficult to overcome.

\subsection{Multi-Class Case}

There are two ways of treating our loss function $L$ for a number of classes (or number of output activations) $K$:
\begin{align}
    \text{Case 1: } L &: \mathbb{R}^K \to \mathbb{R}\\
    \text{Case 2: } L &: \mathbb{R}^K \to \mathbb{R}^K\\
\end{align}

\subsubsection{Case 1 Scalar Loss}

Let $L : \mathbb{R}^K \to \mathbb{R}$. We use the chain rule $D (g \circ f) (x) = Dg(f(x))Df(x)$. 

Let $f$ be a vector valued function so that $f : \mathbb{R}^D \to \mathbb{R}^K$  satisfying the conditions from [representation theorem above] with $x \in \mathbb{R}^D$ and $y_i \in \mathbb{R}^K$ for every $i$. We note that $\dfrac{\partial f}{\partial t}$ is a column and has shape $Kx1$ and our first chain rule can be done the old fashioned way on each row of that column:
\begin{align}
    \dfrac{d f}{d t} &= \sum_{j=1}^M \dfrac{d f(x)}{\partial w_j} \dfrac{d w_j}{d t}\\
    &= -\varepsilon \sum_{j=1}^M \dfrac{d f(x)}{\partial w_j} \sum_{i=1}^N \dfrac{\partial L(f(x_i), y_i)}{\partial w_j}\\
    &\text{Apply chain rule}\\
    &= -\varepsilon \sum_{j=1}^M \dfrac{d f(x)}{\partial w_j} \sum_{i=1}^N \dfrac{\partial L(f(x_i), y_i)}{\partial f}\dfrac{d f(x_i)}{\partial w_j}\\
    &\text{Let}\\
    A &= \dfrac{d f(x)}{\partial w_j} \in \mathbb{R}^{K \times 1}\\
    B &= \dfrac{d L(f(x_i), y_i)}{d f} \in \mathbb{R}^{1 \times K}\\
    C &= \dfrac{d f(x_i)}{\partial w_j} \in \mathbb{R}^{K \times 1}
\end{align}
We have a matrix multiplication $ABC$ and we wish to swap the order so somehow we can pull $B$ out, leaving $A$ and $C$ to compose our product for the representation. Since $BC \in \mathbb{R}$, we have $(BC) = (BC)^T$ and we can write
\begin{align}
    (ABC)^T &= (BC)^TA^T = BCA^T\\
    ABC &= (BCA^T)^T
\end{align}
Note: This condition needs to be checked carefully for other formulations so that we can re-order the product as follows:
\begin{align}
        &= -\varepsilon \sum_{j=1}^M  \sum_{i=1}^N \left(\dfrac{d L(f(x_i), y_i)}{d f} 
        \dfrac{d f(x_i)}{\partial w_j} \left(\dfrac{d f(x)}{\partial  w_j}\right)^T\right)^T
        \\
    &= -\varepsilon \sum_{i=1}^N \left(\dfrac{d L(f(x_i), y_i)}{d f} 
    \sum_{j=1}^M \dfrac{d f(x_i)}{\partial w_j} \left(\dfrac{d f(x)}{\partial w_j}\right)^T\right)^T\\        
\end{align}
Note, now that we are summing over $j$, so we can write this as an inner product on $j$ with the $\nabla$ operator which in this case is computing the jacobian of $f$ along the dimensions of class (index k) and weight (index j). We can define 
\begin{align}
    (\nabla f(x))_{k,j} &= \dfrac{d f_{k}(x)}{\partial w_j}\\
    &= -\varepsilon \sum_{i=1}^N \left(\dfrac{d L(f(x_i), y_i)}{d f} 
     \nabla f(x_i) (\nabla f(x))^T\right)^T\\    
\end{align}
We note that the dimensions of each of these matrices in order are $[1,K]$, $[K,M]$, and $[M,K]$ which will yield a matrix of dimension $[1, K]$ i.e. a row vector which we then transpose to get back a column of shape $[K, 1]$. Also, we note that our kernel inner product now has shape $[K,K]$. 

\subsection{Schemes Other than Forward Euler (SGD)}

\textbf{Variable Step Size:}
Suppose $f$ is being trained using Variable step sizes so that across the training set $X$:
\begin{align}
    \dfrac{d w_s(t)}{dt} &= -\varepsilon_s \nabla_w L(f_{w_s(0)}(X), y_i) = -\varepsilon \sum_{j = 1}^{d} \sum_{i=1}^M  \dfrac{\partial L(f_{w_s(0)}(X),  y_i)}{\partial w_j} \label{eq10}
\end{align}
This additional dependence of $\varepsilon$ on $s$ simply forces us to keep $\varepsilon$ inside the summation in equation~\ref{eq11}. 

\textbf{Other  Numerical Schemes:} Suppose $f$ is being trained using another numerical scheme so that:
\begin{align}
    \dfrac{d w_s(t)}{dt} &= \varepsilon_{s,l} \nabla_w L(f_{w_s(0)}(x_i), y_i) + \varepsilon_{s-1, l}\nabla_w L(f_{w_{s-1}}(x_i), y_i) + \cdots \\
    &= \varepsilon_{s,l} \sum_{j = 1}^{d} \sum_{i=1}^M  \dfrac{\partial L(f_{w_s(0)}(x_i),  y_i)}{\partial w_j} + \varepsilon_{s-1, l} \sum_{j = 1}^{d} \sum_{i=1}^M  \dfrac{\partial L(f_{w_{s-1}(0)}(x_i),  y_i)}{\partial w_j} + \cdots
\end{align}
This additional dependence of $\varepsilon$ on $s$ and $l$ simply results in an additional summation in equation~\ref{eq11}. Since addition commutes through kernels, this allows separation into a separate kernel for each step contribution. Leapfrog and other first order schemes will fit this category. 

\textbf{Higher Order Schemes:} Luckily these are intractable for for most machine-learning models because they would require introducing dependence of the kernel on input data or require drastic changes. It is an open but intractable problem to derive kernels corresponding to higher order methods.

\subsection{Variance Estimation}
In order to estimate variance we treat our derived kernel function $K$ as the covariance function for Gaussian process regression. Given training data $X$ and test data $X'$, we can use the Kriging to write the mean prediction and it variance from true $\mu(x)$ as
\begin{align}
    \bar \mu(X') &= \left[ K(X', X)\right] \left[ K(X, X) \right]^{-1} \left[ Y \right]\\
    \text{Var}(\bar \mu(X') - \mu(X')) &= \left[K(X', X') \right] - \left[ K(X', X)\right] \left[ K(X, X) \right]^{-1} \left[ K(X, X') \right]\\
\end{align}
Where 
\begin{align}
    \left[K(A, B) \right] &= \begin{bmatrix} K(A_1, B_1) & K(A_1, B_2) & \cdots \\
                                             K(A_2, B_1) & K(A_2, B_2) &  \\
                                             \vdots &  & \ddots \\ \end{bmatrix}
\end{align}
To finish our Gaussian estimation, we note that each $K(A_i, B_i)$ will me a $k$ by $k$ matrix where $k$ is the number of classes. We take $\text{tr}(K(A_i, B_i)$ to determine total variance for each prediction. 

\section*{Acknowledgements}

This material is based upon work supported by the Department of Energy (National Nuclear Security Administration Minority Serving Institution Partnership Program's CONNECT - the COnsortium on Nuclear sECurity Technologies) DE-NA0004107.
This report was prepared as an account of work sponsored by an agency of the United States Government.
Neither the United States Government nor any agency thereof, nor any of their employees, makes any warranty, express or implied, or assumes any legal liability or responsibility for the accuracy, completeness, or usefulness of any information, apparatus, product, or process disclosed, or represents that its use would not infringe privately owned rights. The views and opinions of authors expressed herein do not necessarily state or reflect those of the United States Government or any agency thereof.

This material is based upon work supported by the National Science Foundation under Grant No. 2134237. We would like to additionally acknowledge funding from NSF TRIPODS Award Number 1740858 and NSF RTG Applied Mathematics and Statistics for Data-Driven Discovery Award Number 1937229. Any opinions, findings, and conclusions or recommendations expressed in this material are those of the author(s) and do not necessarily reflect the views of the National Science Foundation.





\end{document}